\newtheorem{lemma}{Lemma}[section]
\newtheorem{theorem}{Theorem}[section]
\newtheorem{definition}{Definition}[section]
\newcommand{\inner}[1]{\left\langle#1\right\rangle}
\def\Pr{\mathrm{P}}
\def\R{\mathbb{R}}
\def\N{\mathbb{N}}
\newcommand{\norm}[1]{\left\|#1\right\|}
\def\Pr{\mathrm{P}}
\def\Exp{\mathbb{E}}
\def\argmax{\mathop{\rm arg\,max}\limits}
\def\minop{\mathop{\rm min}\limits}
\def\maxop{\mathop{\rm max}\limits}
\def\sign{\mathop{\rm sign}\limits}
\def\min{\mathop{\rm min}\nolimits}
\def\max{\mathop{\rm max}\nolimits}
\newif\ifpaper
\begin{document}

\title{Why ReLU networks yield high-confidence predictions far away from
the training data and how to mitigate the problem}

\author{Matthias Hein\\
	University of T{\"u}bingen
	\and
	Maksym Andriushchenko\\
	Saarland University
	\and
	Julian Bitterwolf\\
	University of T{\"u}bingen
}


\maketitle
\thispagestyle{empty}

\begin{abstract}
Classifiers used in the wild, in particular for safety-critical systems, 
should not only have good generalization properties but also should know when they don't know, in particular make 
low confidence predictions far away from the training data. We show that
ReLU type neural networks which yield a piecewise
linear classifier function fail in this regard as they produce almost always high confidence predictions far away from the training
data. For bounded domains like images we propose a new robust
optimization technique similar to adversarial training which enforces low confidence
predictions far away from the training data. We show that this technique is surprisingly effective in reducing
the confidence of predictions far away from the training data 
while maintaining high confidence predictions and test error on the original classification task
compared to standard training.
\end{abstract}

\section{Introduction}\label{sec:intro}

Neural networks have recently obtained state-of-the-art performance in several application domains like object recognition and speech recognition. They have become the 
de facto standard for many learning tasks. Despite this great success story and very good prediction performance there are also aspects of neural networks which are undesirable.
One property which is naturally expected from any classifier is that it should know when it does not know or said more directly: far away from the training data a classifier
should not make high confidence predictions. This is particularly important in safety-critical applications like autonomous driving or medical diagnosis systems where such an 
input should either lead to the fact that other redundant sensors are used or that a human doctor is asked to check the diagnosis. It is thus an important property of a classifier which however has not received much attention despite the fact that it seems to be a minimal requirement for any classifier.

There have been many cases reported where high confidence predictions are made far away from the training data by neural networks, e.g. on fooling images \cite{NguYosClu2015}, for out-of-distribution images \cite{HenGim2017} or in a medical diagnosis task \cite{LeiEtAl2017}. Moreover, it has been observed that,
even on the original task, neural networks often produce overconfident predictions \cite{GuoEtAl2017}. A related but different problem are adversarial samples where very small modifications of the input can change the classifier decision \cite{SzeEtAl2014, GooShlSze2015, MooFawFro2016}. Apart from methods which provide robustness guarantees 
for neural networks \cite{HeiAnd2017,WonKol2018,RagSteLia2018,MirGehVec2018} which give still only reasonable guarantees for small networks, up to our knowledge the only approach which has not been broken again \cite{CarWag2016,CarWag2017,AthCarWag2018} is adversarial training \cite{MadEtAl2018} using robust optimization techniques.

While several methods have been proposed to adjust overconfident predictions on the true input distribution using softmax calibration \cite{GuoEtAl2017}, ensemble techniques \cite{LakEtAl2017}
or uncertainty estimation using dropout \cite{GalGha2016}, only recently the detection of out-of-distribution inputs \cite{HenGim2017} has been tackled. The existing approaches basically either use adjustment techniques of the softmax outputs \cite{DeVTay2018,LiaLiSri2018} by temperature rescaling \cite{GuoEtAl2017} or they use a generative model like a VAE or GAN to model boundary inputs of the true distribution \cite{LeeEtAl2018,WanEtAl2018} in order to discriminate in-distribution from out-of-distribution inputs directly in the training process. While all these approaches are significant steps towards obtaining more reliable classifiers, the approaches using a generative model have been recently challenged by 
\cite{NalEtAl2018,HenMazDie2019} which report that generative approaches can produce highly confident density estimates for inputs outside of the class they are supposed to model. Moreover, note that
the quite useful models for confidence calibration on the input distribution like \cite{GalGha2016,GuoEtAl2017,LakEtAl2017} cannot be used for out-of-distribution detection as it has been observed in \cite{LeiEtAl2017}. Another approach is the introduction of a rejection option into the classifier \cite{TewBar2007, BenBou2016}, in order to avoid decisions  the classifier
is not certain about.

In this paper we will
show that for the class of ReLU networks, that are networks with fully connected, convolutional and residual layers, where just ReLU or leaky ReLU are used as activation functions and 
max or average pooling for convolution layers, basically any neural network which results in a piecewise affine classifier function, produces arbitrarily high confidence predictions far away from the training data. This implies that techniques which operate on the output of the classifier cannot identify these inputs as out-of-distribution inputs. 
On the contrary we formalize the well known fact that RBF networks produce almost uniform confidence over the classes far away from the training data, which shows that there exist classifiers which satisfy the minimal requirement of not being confident in areas where one has never seen data.
Moreover, we propose a robust optimization scheme motivated by adversarial training \cite{MadEtAl2018} which simply enforces uniform confidence predictions on noise images which are by construction
far away from the true images. We show that our technique not only significantly reduces confidence on such noise images, but also on other unrelated image classification tasks and
in some cases even for adversarial samples generated for the original classification task. The training procedure is simple, needs no adaptation for different out-of-distribution tasks, has similar complexity as standard adversarial training and achieves similar performance on the original classification
task.

\section{ReLU networks produce piecewise affine functions}\label{sec:explicit}
We quickly review in this section the fact that ReLU networks lead to continuous piecewise affine classifiers, see \cite{AroEtAl2018,CroHei18}, which we briefly summarize
in order to set the ground for our main theoretical result in Section \ref{sec:faraway}. 
\begin{definition} A function $f:\R^d \rightarrow \R$ is called \emph{piecewise affine} if there exists a finite set of polytopes $\{Q_r\}_{r=1}^M$ 
	(referred to as \emph{linear regions} of $f$) such that $\cup_{r=1}^M Q_r= \R^d$ and $f$ is an affine function when restricted to  every $Q_r$.
\end{definition}
Feedforward neural networks which use piecewise affine activation functions (e.g. ReLU, leaky ReLU) and are linear in the output layer can be rewritten as continuous piecewise affine functions \cite{AroEtAl2018}. This includes fully connected, convolutional, residual layers and even skip connections as all these layers are just linear mappings. Moreover, it includes further average pooling and max pooling.  More precisely, the classifier is a function $f:\R^d \rightarrow \R^K$, where $K$ are the number of classes, such that each component $f_i:\R^d \rightarrow \R$, is a continuous piecewise affine function and the $K$ components $(f_i)_{i=1}^K$ have the same set of linear regions. Note that  explicit upper bounds on the number of linear regions have been given \cite{MonEtAl2014}.

In the following we follow \cite{CroHei18}. For simplicity we just present fully connected layers (note that convolutional layers are a particular case of them). Denote by $\sigma:\R \rightarrow \R$, $\sigma(t)=\max\{0,t\}$, the ReLU activation function, by $L+1$ the number of layers and $W^{(l)} \in\mathbb{R}^{n_l \times n_{l-1}}$ and $b^{(l)} \in \R^{n_l}$ respectively are the weights and offset vectors of layer $l$,  for $l=1,\ldots,L+1$ and $n_0=d$. For $x\in \R^d$  one defines $g^{(0)}(x)=x$. Then one can recursively define the pre- and post-activation output of every layer as
\begin{gather*}    f^{(k)}(x)=W^{(k)}g^{(k-1)}(x)+b^{(k)}, \quad \mathrm{ and }\\g^{(k)}(x)=\sigma(f^{(k)}(x)), \quad k=1,\ldots,L,\end{gather*}
so that the resulting classifier is obtained as $f^{(L+1)}(x)=W^{(L+1)}g^{(L)}(x) + b^{(L+1)}$.

Let $\Delta^{(l)},\Sigma^{(l)}\in \R^{n_l \times n_l}$ for $l=1,\ldots,L$ be diagonal matrices defined elementwise as
\begin{gather*} \Delta^{(l)}(x)_{ij} = \begin{cases} \sign(f_i^{(l)}(x)) & \textrm{ if } i=j,\\ 0 & \textrm{ else.} \end{cases}, \\ 
\Sigma^{(l)}(x)_{ij} = \begin{cases} 1 & \textrm{ if } i=j \textrm{ and } f_i^{(l)}(x)>0,\\ 0 & \textrm{ else.} \end{cases}.\end{gather*}
Note that for leaky ReLU the entries would be $1$ and $\alpha$ instead.
This allows to write $f^{(k)}(x)$ as composition of affine functions, that is
\[\begin{split} f^{(k)}(x)=&W^{(k)}\Sigma^{(k-1)}(x)\Big(W^{(k-1)} \Sigma^{(k-2)}(x)\\ & \times \Big(\ldots  \Big( W^{(1)}x + b^{(1)}\Big) \ldots\Big)+ b^{(k-1)} \Big) + b^{(k)}, \end{split}\]
We can further simplify the previous expression as $f^{(k)}(x) = V^{(k)}x + a^{(k)}$, with $V^{(k)} \in \R^{n_k \times d}$ and $a^{(k)} \in \R^{n_k}$ given by
\begin{gather*} V^{(k)} = W^{(k)}\Big( \prod_{l=1}^{k-1} \Sigma^{(k-l)}(x)W^{(k-l)}\Big) \quad \mathrm{ and }\\ a^{(k)} = b^{(k)} + \sum_{l=1}^{k-1} \Big(\prod_{m=1}^{k-l} W^{(k+1-m)} \Sigma^{(k-m)}(x)\Big) b^{(l)}. \end{gather*}
The polytope $Q(x)$, the linear region containing $x$, can be characterized as an intersection of $N=\sum_{l=1}^L n_l$ half spaces given by 
\[\Gamma_{l,i}=\big\{z \in \R^d \,\Big|\, \Delta^{(l)}(x)\big(V_i^{(l)}z +a_i^{(l)}\big)\geq 0\big\},\]
for $l=1,\ldots,L$, $i=1,\ldots,n_l$, namely
\[Q(x)=\bigcap_{l=1,\ldots,L}\bigcap_{i=1,\ldots,n_l} \Gamma_{l,i}. \]
Note that $N$ is also the number of hidden units of the network.
Finally, we can write 
\[ \left.f^{(L+1)}(z)\right|_{Q(x)}=V^{(L+1)}z + a^{(L+1)},\] which is the affine restriction of $f$ to $Q(x)$.


\section{Why ReLU networks produce high confidence predictions far away from the training data}\label{sec:faraway}
With the explicit description of the piecewise linear classifier resulting from a ReLU type network from Section \ref{sec:explicit}, we can now formulate our main theorem. It shows that, as long a very mild condition
on the network holds, for any $\epsilon>0$ one can always find for (almost) \textbf{all} directions an input $z$ far away from the training data which realizes a confidence of $1-\epsilon$ on $z$ for a certain class.
However, before we come to this result, we first present a technical lemma needed in the proof, which uses that all linear regions are polytopes and thus convex sets.
\begin{lemma}\label{le:rays}
Let $\{Q_i\}_{l=1}^R$ be the set of linear regions associated to the ReLU-classifier $f:\R^d \rightarrow \R^K$. For any $x \in \R^d$
there exists $\alpha \in \R$ with $\alpha>0$ and $t \in \{1,\ldots,R\}$ such that $\beta x \in Q_t$ for all $\beta \geq \alpha$.
\end{lemma}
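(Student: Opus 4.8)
The plan is to restrict attention to the ray $\{\beta x : \beta \ge 0\}$ and exploit two structural facts guaranteed by Section~\ref{sec:explicit}: the linear regions form a \emph{finite} closed cover of $\R^d$, and each region is a polytope, hence \emph{convex}. Intersecting the ray with each region yields a family of intervals in the scalar parameter $\beta$ whose union is all of $[0,\infty)$; since a finite union of bounded intervals is bounded while $[0,\infty)$ is not, one of these intervals must be a tail $[\alpha,\infty)$, and the region producing it is the desired $Q_t$.

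First I would fix $x$, parametrize the ray by $r(\beta) = \beta x$ for $\beta \ge 0$, and for each $i \in \{1,\dots,R\}$ define
\[ I_i := \{\beta \ge 0 : \beta x \in Q_i\}. \]
Because $Q_i$ is a polytope it is closed and convex, and $\beta \mapsto \beta x$ is continuous and affine; thus $I_i = r^{-1}(Q_i) \cap [0,\infty)$ is a closed, convex subset of $[0,\infty)$, i.e.\ a closed (possibly empty) interval. Next, since $\cup_{i=1}^R Q_i = \R^d$, every point of the ray lies in some region, so $\bigcup_{i=1}^R I_i = [0,\infty)$. The key combinatorial step is then immediate: a finite union of bounded sets is bounded, but $[0,\infty)$ is unbounded, so at least one $I_t$ must be unbounded. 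An unbounded closed convex subset of $[0,\infty)$ can only be of the form $[\alpha_t,\infty)$ for some $\alpha_t \ge 0$. Choosing this index $t$ and setting $\alpha = \max\{\alpha_t, 1\} > 0$ yields $\beta x \in Q_t$ for all $\beta \ge \alpha$, as claimed. (The degenerate case $x = 0$ is trivial: the ray collapses to the single point $0$, which lies in some $Q_t$, and any $\alpha > 0$ works.)

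I do not anticipate a serious obstacle; the argument is essentially elementary once the right objects are in place. The only points that require a little care are verifying that each $I_i$ is genuinely closed and convex (so that an unbounded one is forced into the tail form $[\alpha_t,\infty)$ rather than, say, an unbounded gappy set), and the harmless final adjustment $\alpha = \max\{\alpha_t,1\}$ needed to guarantee $\alpha > 0$ strictly when $\alpha_t = 0$. The real content is the \emph{finiteness} of the cover, which converts what looks like a topological statement about rays into the elementary observation that finitely many intervals covering a half-line must contain an unbounded member.
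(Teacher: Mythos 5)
Your argument is correct. It rests on exactly the same two ingredients as the paper's proof — the cover by linear regions is \emph{finite}, and each region is a closed convex polytope — but you organize them differently. The paper argues by contradiction: assuming the ray switches regions infinitely often, it lists the crossing parameters $\beta_i$, applies pigeonhole to find a repeated region index $r_i = r_j$, and then uses convexity of $Q_{r_i}$ to force the whole segment $[\beta_i x,\beta_j x]$ into that region, contradicting the assumed switching. You instead argue directly: the trace $I_i = \{\beta \ge 0 : \beta x \in Q_i\}$ of each region on the ray is a closed interval (preimage of a closed convex set under an affine map), finitely many such intervals cover the unbounded set $[0,\infty)$, so one of them is unbounded and hence a tail $[\alpha_t,\infty)$. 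The direct version is arguably cleaner: it sidesteps the paper's somewhat informal setup of a strictly ordered sequence of boundary crossings with $r_{i-1}\neq r_i\neq r_{i+1}$ (delicate to state precisely, since adjacent regions overlap on their boundaries), and it handles the degenerate case $x=0$ and the positivity adjustment $\alpha=\max\{\alpha_t,1\}$ explicitly, which the paper glosses over. Both proofs are valid; yours trades the pigeonhole-plus-contradiction structure for an elementary observation about finitely many closed intervals covering a half-line.
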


All the proofs can be found in the appendix.
Using Lemma \ref{le:rays} we can now state our first main result.
\begin{theorem}\label{th:relu}
Let $\R^d=\cup_{l=1}^R Q_l$ and $f(x)=V^l x + a^l$  be the piecewise affine representation of the
output of a ReLU network on $Q_l$. Suppose that $V^l$ does not contain identical rows for all $l=1,\ldots,R$, then for almost any $x \in \R^d$ and $\epsilon>0$
there exists an $\alpha>0$ and a class $k \in \{1,\ldots,K\}$ such that for $z=\alpha x$ it holds
\[ \frac{e^{f_k(z)}}{\sum_{r=1}^K e^{f_r(z)}} \geq 1-\epsilon.\]
Moreover, $\lim\limits_{\alpha \rightarrow \infty}  \frac{e^{f_k(\alpha x)}}{\sum_{r=1}^K e^{f_r(\alpha x)}} =1$.
\end{theorem}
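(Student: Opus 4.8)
The plan is to combine the geometric content of Lemma~\ref{le:rays} with a direct asymptotic analysis of the softmax along the ray $\{\beta x : \beta > 0\}$. First I would invoke Lemma~\ref{le:rays} to obtain, for the given $x$, a threshold $\alpha_0 > 0$ and an index $t$ such that $\beta x \in Q_t$ for every $\beta \geq \alpha_0$. On this single region the classifier is affine, so for all $\beta \geq \alpha_0$ one may write $f(\beta x) = \beta V^t x + a^t$; in particular the $k$-th logit is
\[ f_k(\beta x) = \beta \inner{V^t_k, x} + a^t_k, \]
where $V^t_k$ denotes the $k$-th row of $V^t$. Thus each logit grows linearly in $\beta$ with slope $\inner{V^t_k, x}$, and the entire question reduces to comparing these $K$ slopes.

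Second, I would use the no-identical-rows hypothesis together with the ``almost any $x$'' clause to single out a strict maximizer among the slopes. Define the exceptional set
\[ B = \bigcup_{l=1}^R \bigcup_{i \neq j} \big\{ x \in \R^d \,\big|\, \inner{V^l_i - V^l_j, x} = 0 \big\}. \]
Since the hypothesis guarantees $V^l_i \neq V^l_j$ for $i \neq j$, each set in this union is a proper hyperplane through the origin, hence Lebesgue-null, and a finite union of null sets is null, so $B$ has measure zero. For any $x \notin B$ the slopes $\inner{V^t_r, x}$, $r=1,\ldots,K$, are pairwise distinct, so there is a unique maximizer $k = \argmax_r \inner{V^t_r, x}$, and $\inner{V^t_k, x} - \inner{V^t_r, x} > 0$ for every $r \neq k$.

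Third, I would compute the limit directly. Writing the confidence as
\[ \frac{e^{f_k(\beta x)}}{\sum_{r=1}^K e^{f_r(\beta x)}} = \frac{1}{1 + \sum_{r \neq k} e^{f_r(\beta x) - f_k(\beta x)}}, \]
and observing that for each $r \neq k$ we have $f_r(\beta x) - f_k(\beta x) = \beta\big(\inner{V^t_r, x} - \inner{V^t_k, x}\big) + (a^t_r - a^t_k) \to -\infty$ as $\beta \to \infty$, because the coefficient of $\beta$ is strictly negative. Hence each term in the sum vanishes, the denominator tends to $1$, and the confidence tends to $1$. This is exactly the limit statement, and the existence of a finite $\alpha$ (with $z = \alpha x$) realizing confidence $\geq 1-\epsilon$ follows immediately from the definition of that limit.

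I expect the only genuinely delicate point to be the measure-zero argument in the second step, precisely because the region index $t$ depends on the chosen $x$. The resolution is that there are only finitely many regions, so taking the union of the exceptional hyperplanes over \emph{all} regions rather than only the one eventually reached still yields a null set; and the no-identical-rows assumption is exactly what ensures each such hyperplane is proper ($V^l_i - V^l_j \neq 0$) rather than all of $\R^d$, which would otherwise make $B$ everything. Everything else is a routine linear-growth-versus-softmax computation.
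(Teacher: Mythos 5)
Your proposal is correct and follows essentially the same route as the paper's proof: invoke Lemma~\ref{le:rays} to land in a single asymptotic region $Q_t$, identify the unique maximizing slope $k=\argmax_r \inner{V^t_r,x}$ using the no-identical-rows assumption, and let the softmax collapse onto that class as $\beta\to\infty$. Your explicit construction of the exceptional null set $B$ as a finite union of proper hyperplanes over all regions is in fact slightly more careful than the paper's one-line remark that the maximum is unique ``up to a set of measure zero.''
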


\begin{figure}
\begin{center}
\includegraphics[width=0.4\textwidth]{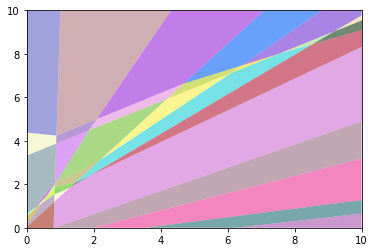}
\caption{\label{fig:regions}A decomposition of $\R^2$ into a finite set of polytopes for a two-hidden layer ReLU network. The outer polytopes extend to infinity. This is where ReLU networks realize
arbitrarily high confidence predictions. The picture is produced with the code of \cite{jordan2019provable}.}
\end{center}
\end{figure}
Please note that the condition that for a region the linear part $V^l$ need not contain two identical rows is very weak. It is hardly imaginable that this is ever true for a normally trained network unless the output of the network is constant anyway. Even if it is true, it just invalidates the assertion of the theorem for the points lying in this region.
Without explicitly enforcing this condition it seems impossible that this is true for all possible asymptotic regions extending to infinity (see Figure \ref{fig:regions}). However, it is also completely open how this condition could be enforced during training of the network.

The result implies that for ReLU networks there exist infinitely many inputs which realize arbitrarily high confidence predictions of the networks. It is easy to see that the temperature rescaling of the softmax, $\frac{e^{f_k(x)/T}}{\sum_{l=1}^K e^{f_l(x)/T}}$, for temperature $T>0$, as used in \cite{LiaLiSri2018}, will not be able to detect these cases,
in particular since the first step of the method in \cite{LiaLiSri2018} consists of going in the direction of increasing confidence. Also it is obvious that using a reject option in the classifier, see e.g. \cite{BarWeg2008}, will not help to detect these instances either. The result is negative in the sense that it looks like that without modifying the architecture of a ReLU network it is impossible to prevent this phenomenon. Please note that from the point of view of Bayesian decision theory the softmax function is the correct transfer function \cite{LapHeiSch2016} for the cross-entropy loss turning the classifier output $f_k(x)$ into an estimate $\Pr(Y=k\,|x,f)=\frac{e^{f_k(x)}}{\sum_{l=1}^K e^{f_l(x)}}$ for the conditional probability at $x$.

While the previous result seems not to be known, the following result is at least qualitatively known \cite{GooShlSze2015} but we could not find a reference for it. In contrast to the ReLU networks
it turns out that Radial Basis Function (RBF) networks have the property to produce approximately uniform confidence predictions far away from the training data. Thus there exist classifiers which satisfy the minimal requirement which we formulated in Section \ref{sec:intro}. In the following theorem we explicitly quantify what ``far away'' means in terms of parameters of the RBF classifier and the training data.
\begin{theorem}\label{th:RBF}
Let $f_k(x)=\sum_{l=1}^N \alpha_{kl} e^{-\gamma \norm{x-x_l}^2_2}$, $k=1,\ldots,K$ be an RBF-network trained with cross-entropy loss on the training data $(x_i,y_i)_{i=1}^N$. We define
$r_{\min}=\minop_{l=1,\ldots,N} \norm{x-x_l}_2$ and $\alpha=\maxop_{r,k} \sum_{l=1}^N |\alpha_{rl}-\alpha_{kl}|$. If $\epsilon>0$ and 
\[ r^2_{\min} \geq \frac{1}{\gamma}\log\Big(\frac{\alpha}{\log(1+K\epsilon)}\Big),\]
then for all $k=1,\ldots,K$,
\[ \frac{1}{K}-\epsilon \; \leq\; \frac{e^{f_k(x)}}{\sum_{r=1}^K e^{f_r(x)}} \; \leq \; \frac{1}{K}+\epsilon.\] 
\end{theorem}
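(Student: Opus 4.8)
The plan is to work with the softmax written in its relative form
\[ \frac{e^{f_k(x)}}{\sum_{r=1}^K e^{f_r(x)}} = \frac{1}{\sum_{r=1}^K e^{f_r(x)-f_k(x)}}, \]
so that the only quantities that matter are the pairwise differences $f_r(x)-f_k(x)$ rather than the individual outputs $f_k(x)$. This is the key reduction: far from the data the individual values need not be small in any sense directly useful for the softmax, but their \emph{differences} are, and the quantity $\alpha$ in the statement is defined precisely to control them.

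First I would bound these differences. Since $f_r(x)-f_k(x)=\sum_{l=1}^N(\alpha_{rl}-\alpha_{kl})e^{-\gamma\norm{x-x_l}_2^2}$ and every exponential factor obeys $e^{-\gamma\norm{x-x_l}_2^2}\leq e^{-\gamma r_{\min}^2}$ by definition of $r_{\min}$, the triangle inequality gives
\[ \abs{f_r(x)-f_k(x)} \leq e^{-\gamma r_{\min}^2}\sum_{l=1}^N\abs{\alpha_{rl}-\alpha_{kl}} \leq \alpha\, e^{-\gamma r_{\min}^2} =: \delta. \]
The hypothesis on $r_{\min}^2$ rearranges (exponentiate and clear denominators) to $e^{-\gamma r_{\min}^2}\leq \log(1+K\epsilon)/\alpha$, hence $\delta\leq\log(1+K\epsilon)$, which is the single place the assumption enters.

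Next I would turn the uniform estimate $\abs{f_r(x)-f_k(x)}\leq\delta$ into two-sided control of the softmax. Since $e^{-\delta}\leq e^{f_r(x)-f_k(x)}\leq e^{\delta}$ for every $r$, summing over the $K$ terms yields $K e^{-\delta}\leq\sum_{r=1}^K e^{f_r(x)-f_k(x)}\leq K e^{\delta}$, and taking reciprocals gives
\[ \frac{e^{-\delta}}{K} \leq \frac{e^{f_k(x)}}{\sum_{r=1}^K e^{f_r(x)}} \leq \frac{e^{\delta}}{K}. \]
The upper bound follows at once: $\delta\leq\log(1+K\epsilon)$ gives $e^{\delta}/K\leq(1+K\epsilon)/K=1/K+\epsilon$.

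Finally, for the lower bound I would write $e^{-\delta}\geq e^{-\log(1+K\epsilon)}=1/(1+K\epsilon)$ and then apply the elementary inequality $1/(1+K\epsilon)\geq 1-K\epsilon$, which holds because $(1-K\epsilon)(1+K\epsilon)=1-K^2\epsilon^2\leq 1$; this yields $e^{-\delta}/K\geq(1-K\epsilon)/K=1/K-\epsilon$. I do not expect a genuine obstacle here: the entire argument is a chain of elementary estimates, and the only point needing a little care is the asymmetry between the two sides, namely that the upper bound matches the hypothesis exactly while the lower bound requires the harmless extra step converting $1/(1+K\epsilon)$ into $1-K\epsilon$. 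The real content lies in the opening move of controlling differences through $\alpha$; everything after that is bookkeeping. (The degenerate case $\alpha=0$, where all rows $\alpha_{k\cdot}$ coincide and the softmax equals $1/K$ identically, is trivial and can be set aside.)
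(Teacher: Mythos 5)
Your proposal is correct and follows essentially the same route as the paper's proof: reduce to the pairwise differences $f_r(x)-f_k(x)$, bound them uniformly by $\alpha e^{-\gamma r_{\min}^2}\leq\log(1+K\epsilon)$ using the hypothesis, and convert this into the two-sided softmax bound via $e^{\pm\delta}/K$ together with the elementary inequality $(1-K\epsilon)(1+K\epsilon)\leq 1$. The only differences are cosmetic (introducing $\delta$ and summing the termwise bounds rather than inserting $e^{\pm|f_r-f_k|}$ directly), and your explicit remark about the degenerate case $\alpha=0$ is a harmless extra.
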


We think that it is a very important open problem to realize a similar result as in Theorem \ref{th:RBF} for a class of neural networks. Note that arbitrarily high confidence predictions
for ReLU networks can be obtained only if the domain is unbounded, e.g. $\R^d$. However, images are contained in $[0,1]^d$ and thus Theorem \ref{th:relu} does not directly apply, even
though the technique can in principle be used to produce high-confidence predictions (see Table \ref{tab:alpha_main}). In the next section we propose a novel training scheme enforcing low confidence predictions on inputs
far away from the training data.

\section{Adversarial Confidence Enhanced Training}\label{sec:conftraining}
In this section we suggest a simple way to adjust the confidence estimation of a neural network far away from the training data, not necessarily restricted to ReLU networks studied in Theorem \ref{th:relu}. Theorem \ref{th:relu} tells us that for ReLU networks a post-processing of the softmax scores is not sufficient to avoid high-confidence predictions far away from the training data - instead there seem to be two potential ways to tackle the problem: a) one uses an extra generative model either for the in-distribution or for the out-distribution or b) one modifies directly the network via an adaptation of the training process so that uniform confidence predictions are enforced far away from the training data. As recently problems with generative models have been pointed out which assign high confidence to samples from the out-distribution \cite{NalEtAl2018} and thus a) seems less promising, we explore approach b).

We assume that it is possible to characterize a distribution of data points $p_{\textrm{out}}$ on the input space for which we are sure that they do not belong to the true distribution $p_{\textrm{in}}$ resp. the set of the intersection of their supports has zero or close to zero probability mass. An example of such an out-distribution $p_{\textrm{out}}$ would be the uniform distribution on $[0,1]^{w \times h}$ ($w \times h$ gray scale images) or similar noise distributions. Suppose that the in-distribution consists of certain image classes like handwritten digits, then the probability mass of all images of handwritten digits under the $p_{\textrm{out}}$ is zero (if it is really a low-dimensional manifold) or close to zero.

In such a setting the training objective can be written as a sum of two losses:
\begin{align}\label{eq:CEDA}
 \frac{1}{N}\sum_{i=1}^N L_{CE}(y_i,f(x_i)) + \lambda\, \Exp\big[L_{p_{\textrm{out}}}(f,Z)\big],
\end{align}
where $(x_i,y_i)_{i=1}^N$ is the i.i.d. training data, 
$Z$ has distribution $p_{\textrm{out}}$ and
\begin{align}
L_{CE}(y_i,f(x_i)) &= -\log\Big(\frac{e^{f_{y_i}(x_i)}}{\sum_{k=1}^K e^{f_k(x_i)}}\Big) \\
L_{p_{\textrm{out}}}(f,z)&= \maxop_{l=1,\ldots,K} \log\Big(\frac{e^{f_l(z)}}{\sum_{k=1}^K e^{f_k(z)}}\Big).\label{eq:Lpout}
\end{align} 
$L_{CE}$ is the usual cross entropy loss on the original classification task and $L_{p_{\textrm{out}}}(f,z)$ is the maximal log confidence over all classes,
where the confidence of class $l$ is given by $\frac{e^{f_l(z)}}{\sum_{k=1}^K e^{f_k(z)}}$, with the softmax function as the link function. The full loss can be easily minimized by using SGD with batchsize $B$ for the original data and adding $\lceil \lambda B \rceil$ samples from $p_{\textrm{out}}$ on which one enforces a uniform distribution over the labels. We call this process in the following \emph{confidence enhancing data augmentation (CEDA)}. We note that in a concurrent paper \cite{HenMazDie2019} a similar scheme has been proposed, where they use as $p_{\textrm{out}}$ existing large image datasets, whereas we favor an agnostic approach where $p_{\textrm{out}}$ models a certain ``noise'' distribution on images.

The problem with CEDA is that it might take too many samples to enforce low confidence on the whole out-distribution. Moreover, it has been shown in the area of adversarial manipulation that data augmentation is not sufficient for robust models and we will see in Section \ref{sec:exp} that indeed CEDA models still produce high confidence predictions in a neighborhood of noise images. Thus, we propose to use ideas from robust optimization similar to adversarial training which \cite{SzeEtAl2014, GooShlSze2015,MadEtAl2018} apply to obtain robust networks against adversarial manipulations. Thus we are enforcing low confidence not only at the point itself but actively minimize the worst case in a neighborhood of the point. This leads to the following formulation of \emph{adversarial confidence enhancing training (ACET)}
\begin{align}\label{eq:ACET}
 \frac{1}{N}\sum_{i=1}^N L_{CE}(y_i,f(x_i)) + \lambda \,\Exp\big[\maxop_{\norm{u-Z}_p \leq \epsilon} L_{p_{\textrm{out}}}(f,u)\big],
\end{align}
where in each SGD step one solves (approximately) for a given $z \sim p_{\textrm{out}}$ the optimization problem: 
\begin{equation}\label{eq:adv}
 \maxop_{\norm{u-z}_p \leq \epsilon} L_{p_{\textrm{out}}}(f,u).
\end{equation}
In this paper we use always $p=\infty$. Note that if the distributions $p_{\textrm{out}}$ and $p_{\textrm{in}}$ have joint support, the maximum in \eqref{eq:adv} could be obtained at a point in the support of the true distribution. However, if $p_{\textrm{out}}$ is a generic noise distribution like uniform noise or a smoothed version of it, then the number of cases where this happens has probability mass close to zero under $p_{\textrm{out}}$ and thus does not negatively influence in \eqref{eq:ACET} the loss $L_{CE}$ on the true distribution.
The optimization of ACET in \eqref{eq:ACET} can be done using an adapted version of the PGD method of \cite{MadEtAl2018} for adversarial training where one performs projected gradient descent (potentially for a few restarts) and uses the $u$ realizing the worst loss for computing the gradient. The resulting samples are more informative and thus lead to a faster and more significant reduction of
high confidence predictions far away from the training data. We use $\epsilon=0.3$ for all datasets. We present in Figure \ref{fig:advsamplesGray}
and \ref{fig:advsamplesColor} for MNIST and CIFAR-10 a few noise images together with their adversarial modification $u$ generated by applying PGD to solve \eqref{eq:adv}. One can observe that the generated images have no structure resembling images from the in-distribution.

\def\mnistpairs{05, 06, 07, 11, 15, 18, 34, 39}
\newcounter{mnistpairscounter}
\begin{figure*}
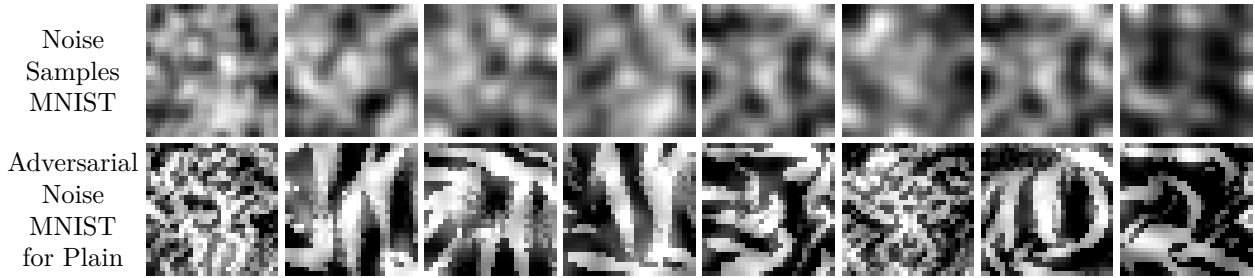

\begin{tikzpicture}
    \node[text width=2cm, align=center] at (-1.85, 0) {Noise Samples MNIST};
    \node[text width=2cm, align=center] at (-1.85, -1.85) {Adversarial Noise MNIST for Plain};
    \foreach \x in \mnistpairs    
    {    \node[inner sep=0pt] (s0) at (1.85*\value{mnistpairscounter},0)
            {\includegraphics[width=1.75cm]{noise_folder_plainm_\x x_in.png}};
         \node[inner sep=0pt] (a0) at (1.85*\value{mnistpairscounter},-1.85)
            {\includegraphics[width=1.75cm]{noise_folder_plainm_\x.png}};
        \stepcounter{mnistpairscounter}
    }
\end{tikzpicture}
   \caption{\label{fig:advsamplesGray}Top row: our generated noise images based on uniform noise resp. permuted MNIST together with a Gaussian filter and contrast rescaling. 
   Bottom row: for each noise image from above we generate the corresponding adversarial noise image using PGD with 40 iterations maximizing the second part of the loss in ACET
   for the plain model. 
   Note that neither in the noise images nor in the adversarially modified ones there is structure similar to a MNIST image. For ACET and CEDA it is very difficult to generate
   adversarial noise images for the fully trained models thus we omit them.
}
\end{figure*}

\def\cifarpairs{26, 00, 01, 03, 16, 18, 20, 24}
\newcounter{cifarpairscounter}
\begin{figure*}
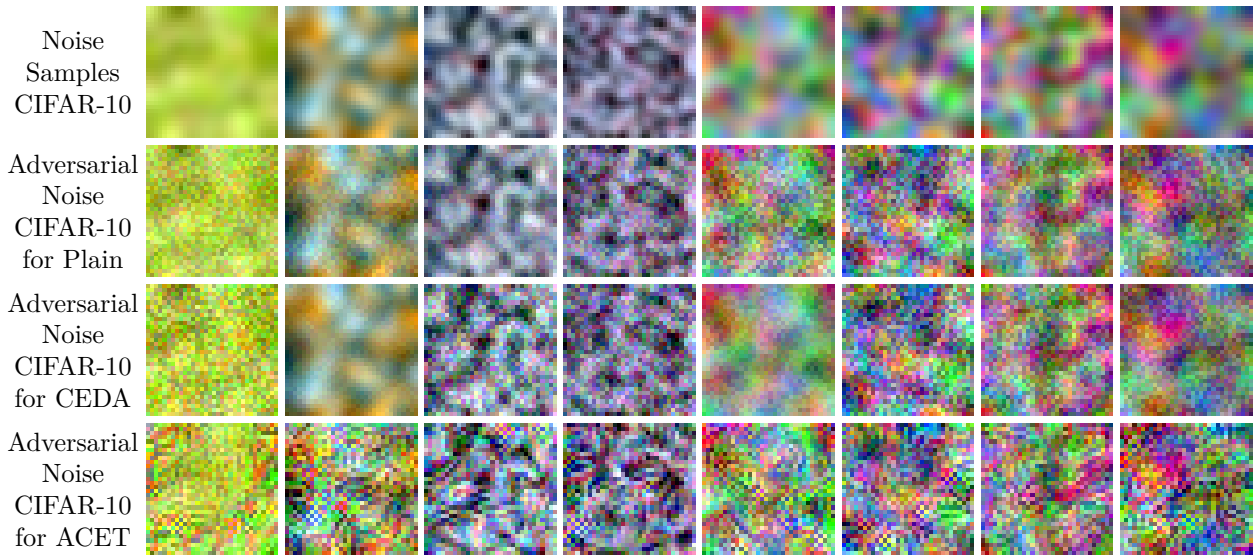

\begin{tikzpicture}
    \node[text width=2cm, align=center] at (-1.85, 0) {Noise Samples CIFAR-10};
    \node[text width=2cm, align=center] at (-1.85, -1.85) {Adversarial Noise CIFAR-10 for Plain};
    \node[text width=2cm, align=center] at (-1.85, -3.7) {Adversarial Noise CIFAR-10 for CEDA};
    \node[text width=2cm, align=center] at (-1.85, -5.55) {Adversarial Noise CIFAR-10 for ACET};
    
    \foreach \x in \cifarpairs    
   {    
            \node[inner sep=0pt] (s0) at (1.85*\value{cifarpairscounter},0)
            {\includegraphics[width=1.75cm]{noise_folder_cedac10_\x x_in.png}};
              \node[inner sep=0pt] (s0) at (1.85*\value{cifarpairscounter},-1.85)
            {\includegraphics[width=1.75cm]{noise_folder_plainc10_\x.png}};
         \node[inner sep=0pt] (a0) at (1.85*\value{cifarpairscounter},-3.7)
            {\includegraphics[width=1.75cm]{noise_folder_cedac10_\x.png}};
         \node[inner sep=0pt] (a0) at (1.85*\value{cifarpairscounter},-5.55)
            {\includegraphics[width=1.75cm]{noise_folder_acetc10_\x.png}};
        \stepcounter{cifarpairscounter}
    }
\end{tikzpicture}
  \caption{\label{fig:advsamplesColor}
  	Top row: our generated noise images based on uniform noise resp. permuted MNIST together with a Gaussian filter and contrast rescaling (similar to Figure \ref{fig:advsamplesGray}). Bottom rows: the corresponding adversarial images for the plain, CEDA, and ACET models. Neither the noise nor the adversarial noise images show similarity to CIFAR-10 images. 
}
\end{figure*}

\begin{table*}
	\begin{center}
		\begin{tabular}{|l||r|r|r||r|r|r||r|r|r|}
			\hline\multirowcell{2}{Trained on \\ \textbf{MNIST}} &
			\multicolumn{3}{c||}{Plain (TE: \textbf{0.51\%})}  &
			\multicolumn{3}{c||}{CEDA (TE: 0.74\%)} &
			\multicolumn{3}{c|}{ACET (TE: 0.66\%)} \\
			& MMC    & AUROC    & FPR@95    & MMC    & AUROC  & FPR@95    & MMC    & AUROC    &FPR@95 \\ \hline \hline
			MNIST
			& \textbf{0.991} & -- & -- & 0.987 & -- & -- & 0.986 & -- & -- \\ \hline
			FMNIST
			& 0.654 & 0.972 & 0.121
			& 0.373 & 0.994 & 0.027
			& \textbf{0.239} & \textbf{0.998} & \textbf{0.003}
			\\ \hline
			EMNIST
			& 0.821 & 0.883 & 0.374
			& 0.787 & 0.895 & 0.358
			& \textbf{0.752} & \textbf{0.912} & \textbf{0.313}
			\\ \hline
			grayCIFAR-10
			& 0.492 & 0.996 & 0.003
			& 0.105 & \textbf{1.000} & \textbf{0.000}
			& \textbf{0.101} & \textbf{1.000} & \textbf{0.000}
			\\ \hline
			Noise
			& 0.463 & 0.998 & 0.000
			& \textbf{0.100} & \textbf{1.000} & \textbf{0.000}
			& \textbf{0.100} & \textbf{1.000} & \textbf{0.000}
			\\ \hline
			Adv. Noise
			& 1.000 & 0.031 & 1.000
			& \textbf{0.102} & \textbf{0.998} & \textbf{0.002}
			& 0.162 & 0.992 & 0.042
			\\ \hline
			Adv. Samples
			& 0.999 & 0.358 & 0.992
			& 0.987 & 0.549 & 0.953
			& \textbf{0.854} & \textbf{0.692} & \textbf{0.782}
			\\ \hline
			\hline\multirowcell{2}{Trained on \\ \textbf{SVHN}} &
			\multicolumn{3}{c||}{Plain (TE: \textbf{3.53\%})}  &
			\multicolumn{3}{c||}{CEDA (TE: 3.50\%)} &
			\multicolumn{3}{c|}{ACET (TE: 3.52\%)} \\
			& MMC    & AUROC    & FPR@95    & MMC    & AUROC  & FPR@95    & MMC    & AUROC    &FPR@95 \\ \hline \hline
			SVHN
			& \textbf{0.980} & -- & -- & 0.977 & -- & -- & 0.978 & -- & -- \\ \hline
			CIFAR-10
			& 0.732 & 0.938 & 0.348
			& 0.551 & 0.960 & 0.209
			& \textbf{0.435} & \textbf{0.973} & \textbf{0.140}
			\\ \hline
			CIFAR-100
			& 0.730 & 0.935 & 0.350
			& 0.527 & 0.959 & 0.205
			& \textbf{0.414} & \textbf{0.971} & \textbf{0.139}
			\\ \hline
			LSUN CR
			& 0.722 & 0.945 & 0.324
			& 0.364 & 0.984 & 0.084
			& \textbf{0.148} & \textbf{0.997} & \textbf{0.012}
			\\ \hline
			Imagenet-
			& 0.725 & 0.939 & 0.340
			& 0.574 & 0.955 & 0.232
			& \textbf{0.368} & \textbf{0.977} & \textbf{0.113}
			\\ \hline
			Noise
			& 0.720 & 0.943 & 0.325
			& \textbf{0.100} & \textbf{1.000} & \textbf{0.000}
			& \textbf{0.100} & \textbf{1.000} & \textbf{0.000}
			\\ \hline
			Adv. Noise
			& 1.000 & 0.004 & 1.000
			& 0.946 & 0.062 & 0.940
			& \textbf{0.101} & \textbf{1.000} & \textbf{0.000}
			\\ \hline
			Adv. Samples
			& 1.000 & 0.004 & 1.000
			& 0.995 & 0.009 & 0.994
			& \textbf{0.369} & \textbf{0.778} & \textbf{0.279}
			\\ \hline
			\hline\multirowcell{2}{Trained on \\ \textbf{CIFAR-10}} &
			\multicolumn{3}{c||}{Plain (TE: 8.87\%)}  &
			\multicolumn{3}{c||}{CEDA (TE: 8.87\%)} &
			\multicolumn{3}{c|}{ACET (TE: \textbf{8.44\%})} \\
			& MMC    & AUROC    & FPR@95    & MMC    & AUROC  & FPR@95    & MMC    & AUROC    &FPR@95 \\ \hline \hline
			CIFAR-10
			& \textbf{0.949} & -- & -- & 0.946 & -- & -- & 0.948 & -- & -- \\ \hline
			SVHN
			& 0.800 & 0.850 & 0.783
			& 0.327 & 0.978 & 0.146
			& \textbf{0.263} & \textbf{0.981} & \textbf{0.118}
			\\ \hline
			CIFAR-100
			& 0.764 & 0.856 & 0.715
			& \textbf{0.761} & \textbf{0.850} & 0.720
			& 0.764 & 0.852 & \textbf{0.711}
			\\ \hline
			LSUN CR
			& 0.738 & \textbf{0.872} & \textbf{0.667}
			& \textbf{0.735} & 0.864 & 0.680
			& 0.745 & 0.858 & 0.677
			\\ \hline
			Imagenet-
			& 0.757 & 0.858 & 0.698
			& 0.749 & 0.853 & 0.704
			& \textbf{0.744} & \textbf{0.859} & \textbf{0.678}
			\\ \hline
			Noise
			& 0.825 & 0.827 & 0.818
			& \textbf{0.100} & \textbf{1.000} & \textbf{0.000}
			& \textbf{0.100} & \textbf{1.000} & \textbf{0.000}
			\\ \hline
			Adv. Noise
			& 1.000 & 0.035 & 1.000
			& 0.985 & 0.032 & 0.983
			& \textbf{0.112} & \textbf{0.999} & \textbf{0.008}
			\\ \hline
			Adv. Samples
			& 1.000 & 0.034 & 1.000
			& 1.000 & 0.014 & 1.000
			&\textbf{ 0.633} & \textbf{0.512} & \textbf{0.590}
			\\ \hline      
			\hline\multirowcell{2}{Trained on \\ \textbf{CIFAR-100}} &
			\multicolumn{3}{c||}{Plain (TE: \textbf{31.97\%})}  &
			\multicolumn{3}{c||}{CEDA (TE: 32.74\%)} &
			\multicolumn{3}{c|}{ACET (TE: 32.24\%)} \\
			& MMC    & AUROC    & FPR@95    & MMC    & AUROC  & FPR@95    & MMC    & AUROC    &FPR@95 \\ \hline \hline
			CIFAR-100
			& \textbf{0.751} & -- & -- & 0.734 & -- & -- & 0.728 & -- & -- \\ \hline
			SVHN
			& 0.570 & 0.710 & 0.865
			& 0.290 & 0.874 & 0.410
			& \textbf{0.234} & \textbf{0.912} & \textbf{0.345}
			\\ \hline
			CIFAR-10
			& 0.560 & 0.718 & 0.856
			& 0.547 & 0.711 & 0.855
			& \textbf{0.530} & \textbf{0.720} & \textbf{0.860}
			\\ \hline
			LSUN CR
			& 0.592 & 0.690 & 0.887
			& 0.581 & 0.678 & 0.887
			& \textbf{0.554} & \textbf{0.698} & \textbf{0.881}
			\\ \hline
			Imagenet-
			& 0.531 & 0.744 & 0.827
			& 0.504 & 0.749 & 0.808
			& \textbf{0.492} & \textbf{0.752} & \textbf{0.819}
			\\ \hline
			Noise
			& 0.614 & 0.672 & 0.928
			& 0.010 & 1.000 & 0.000
			& \textbf{0.010} & \textbf{1.000} & \textbf{0.000}
			\\ \hline
			Adv. Noise
			& 1.000 & 0.000 & 1.000
			& 0.985 & 0.015 & 0.985
			& \textbf{0.013} & \textbf{0.998} & \textbf{0.003}
			\\ \hline
			Adv. Samples
			& 0.999 & 0.010 & 1.000
			& 0.999 & 0.012 & 1.000
			& \textbf{0.863} & \textbf{0.267} & \textbf{0.975}
			\\ \hline
		\end{tabular}
	\end{center}
	\caption{\label{tab:mainresults} On the four datasets MNIST, SVHN, CIFAR-10, and CIFAR-100, we train three models: Plain, CEDA and ACET. We evaluate them on out-of-distribution samples (other image datasets, noise, adversarial
		noise and adversarial samples built from the test set on which was trained).
		We report test error of all models and show the mean maximum confidence (MMC) on the in- and out-distribution
		samples (lower is better for out-distribution samples), the AUC of the ROC curve (AUROC) for the discrimination between in- and out-distribution based on confidence value (higher is better), and the FPR at 95\% true positive rate for the same problem (lower is better).}
\end{table*}

\section{Experiments}\label{sec:exp}
In the evaluation, we follow \cite{HenGim2017,LiaLiSri2018,LeeEtAl2018} by training on one dataset and evaluating the confidence on other out of distribution datasets and noise images. In contrast to
\cite{LiaLiSri2018,LeeEtAl2018} we neither use a different parameter set for each test dataset \cite{LiaLiSri2018} nor do we use one of the test datasets during training \cite{LeeEtAl2018}. More precisely, we train on MNIST, SVHN, CIFAR-10 and CIFAR-100, where we use the LeNet architecture on MNIST taken from \cite{MadEtAl2018} and a ResNet architecture \cite{he2016deep} for the other datasets. We also use standard data augmentation which includes random crops for all datasets and random mirroring for CIFAR-10 and CIFAR-100. For the
generation of out-of-distribution images from $p_{\textrm{out}}$ we proceed as follows: half of the images are generated by randomly permuting pixels of images from the training set and
half of the images are generated uniformly at random. Then we apply to these images a Gaussian filter with standard deviation $\sigma \in [1.0,2.5]$ as lowpass filter to have more low-frequency structure in the noise.
As the Gaussian filter leads to a contrast reduction we apply afterwards a global rescaling so that the maximal range of the image is again in $[0,1]$. 
\\
\textbf{Training:} We train each model normally (plain), with confidence enhancing data augmentation (CEDA) and with adversarial confidence enhancing training (ACET). It is well known that weight decay alone reduces overconfident predictions. Thus we use weight decay with regularization parameter $5 \cdot 10^{-4}$ for all models leading to a strong baseline (plain). For both CEDA \eqref{eq:CEDA} and ACET \eqref{eq:ACET} we use $\lambda=1$, that means $50\%$ of the samples in each batch are from the original training set and $50\%$ are noise samples as described before. For ACET we use $p=\infty$ and $\epsilon=0.3$ and optimize with PGD \cite{MadEtAl2018} using 40 iterations and stepsize $0.0075$ for all datasets. All models are trained for 100 epochs with ADAM \cite{KinEtAl2014} on MNIST and SGD+momentum for SVHN/CIFAR-10/CIFAR-100. The initial learning rate is $10^{-3}$ for MNIST and $0.1$ for SVHN/CIFAR-10 and it is reduced by a factor of 10 at the $50$th, $75$th and $90$th of the in total $100$ epochs.
The code is available at \url{https://github.com/max-andr/relu_networks_overconfident}. 
\begin{table*}
	\begin{center}
		\begin{tabular}{|l||c|c|c|c||c|c|c|c|}
			\hline
			& \multicolumn{4}{c||}{Plain}  & \multicolumn{4}{c|}{ACET} \\
			& MNIST & SVHN & CIFAR-10 & CIFAR-100    &    MNIST & SVHN & CIFAR-10 & CIFAR-100  \\ 
			\hline \hline
			Median $\alpha$
			& 1.5  & 28.1 & 8.1 & \textbf{9.9}  & $\mathbf{3.0 \cdot 10^{15}}$ & \textbf{49.8}  & \textbf{45.3} & \textbf{9.9}\\
			\hline    
			\% overconfident
			& 98.7\% & 99.9\% & 99.9\% & 99.8\%   & \textbf{0.0\%} & \textbf{50.2\%}  & \textbf{3.4\%} & \textbf{0.0\%}\\
			\hline
		\end{tabular}
	\end{center}
	\caption{\label{tab:alpha_main} 
		\textbf{First row:} We evaluate all trained models on uniform random inputs scaled by a constant $\alpha \geq 1$ (note that the resulting inputs will not constitute valid images anymore, since in most cases they exceed the $[0, 1]^d$ box). We find the minimum $\alpha$ such that the models output 99.9\% confidence on them, and report the median over 10 000 trials. As predicted by Theorem \ref{th:relu} we observe that it is always possible to obtain overconfident predictions just by scaling inputs by some constant $\alpha$, and for plain models this constant is smaller than for ACET. 
		\textbf{Second row:} we show the percentage of overconfident predictions (higher than 95\% confidence) when projecting back the $\alpha$-rescaled uniform noise images back to $[0, 1]^d$. One observes that there are much less overconfident predictions for ACET compared to standard training.
	}
\end{table*}
\begin{figure*}[ht]
\begin{center}
\begin{tabular}{ccc}
Plain & CEDA & ACET\\
\includegraphics[width=0.62\columnwidth]{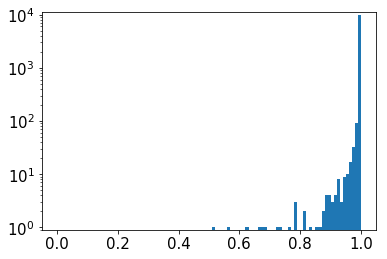}&
\includegraphics[width=0.62\columnwidth]{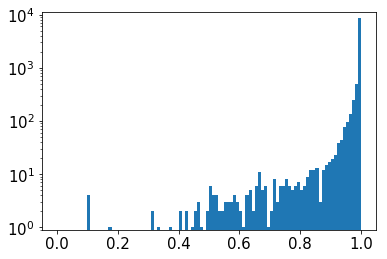}&
\includegraphics[width=0.62\columnwidth]{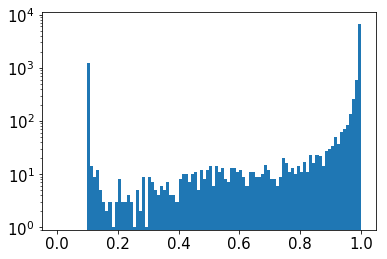}
\end{tabular}
 \end{center}
   \caption{\label{hist:mnist_main}Histogram of confidence values (logarithmic scale) of adversarial samples based on MNIST test points. 
   ACET is the only model where a significant fraction of adversarial samples have very low confidence. Note, however that the ACET model has not been trained on adversarial samples of MNIST, but only on adversarial noise. }
\end{figure*}
\\
\textbf{Evaluation:} We report for each model (plain, CEDA, ACET) the test error and the mean maximal confidence (for each point this is $\max_{k=1,\ldots,K} \frac{e^{f_k(x)}}{\sum_{l=1}^K e^{f_l(x)}}$), denoted as MMC, on the test set. In order to evaluate how well we reduce the confidence on the out-distribution, we use four datasets on CIFAR-10 \cite{cifar10} and SVHN \cite{SVHN} (namely among CIFAR-10, CIFAR-100, SVHN, ImageNet-, which is a subset of ImageNet where we removed classes similar to CIFAR-10,  and the classroom subset of LSUN \cite{LSUN} we use the ones
on which we have \emph{not} trained) and for MNIST we evaluate on EMNIST \cite{CohEtAl2017}, a grayscale version of CIFAR-10 and Fashion MNIST \cite{XiaoEtAl2017}. Additionally, we show the evaluation on noise, adversarial noise and adversarial samples. The noise is generated in the same way as the noise we use for training. For adversarial noise, where we maximize the maximal confidence over all classes
(see $L_{p_{\textrm{out}}}(f,z)$ in \eqref{eq:Lpout}), we use PGD 
with 200 iterations and stepsize $0.0075$ in the $\epsilon$ ball wrt the $\norm{\cdot}_\infty$-norm with $\epsilon=0.3$ (same as in training). Note that for training we use only $40$ iterations, so that the attack at test time is significantly stronger. Finally, we check also the confidence on adversarial samples computed for the test set of the in-distribution dataset  using $80$ iterations of PGD with $\epsilon=0.3$, stepsize $0.0075$ for MNIST and $\epsilon=0.1$, stepsize $0.0025$ for the other datasets. The latter two evaluation modalities are novel compared to
\cite{HenGim2017,LiaLiSri2018,LeeEtAl2018}. The adversarial noise is interesting as it actively searches for images which still yield high confidence in a neighborhood of a noise image and thus is a much more challenging than the pure evaluation on noise. Moreover, it potentially detects an over-adaptation to the noise model used during training in particular in CEDA. The evaluation on adversarial samples is interesting as one can hope that the reduction of the confidence for out-of-distribution images also reduces the confidence of adversarial samples as typically adversarial samples are off the data manifold \cite{StuHeiSch2019} and thus are also out-of-distribution samples (even though their distance to the true distribution is small). Note that our models have never seen adversarial samples during training, they only have
been trained using the adversarial noise. Nevertheless our ACET model can reduce the confidence on adversarial samples.
As evaluation criteria we use the mean maximal confidence, the area under the ROC curve (AUC) where we use the confidence as a threshold for the detection problem (in-distribution vs. out-distribution). Moreover, we report in the same setting the false positive rate (FPR) when the true positive rate (TPR) is fixed to $95\%$. All results can be found in
Table \ref{tab:mainresults}.

\textbf{Main Results:}
In Table \ref{tab:mainresults}, we show the results of plain (normal training), CEDA and ACET. First of all, we observe that there is almost no difference between the test errors of
all three methods. Thus improving the confidence far away from the training data does not impair the generalization performance. We also see that the plain models always produce relatively high
confidence predictions on noise images and completely fail on adversarial noise. CEDA produces low confidence on noise images but mostly fails (except for MNIST) on adversarial noise which was to be expected
as similar findings have been made for the creation of adversarial samples. Only ACET consistently produces low confidence predictions on adversarial noise and has high AUROC. For the 
out-of-distribution datasets, CEDA and ACET improve most of the time the maximal confidence and the AUROC, sometimes with very strong improvements like on MNIST evaluated on FMNIST
or SVHN evaluated on LSUN. However, one observes that it is more difficult to reduce the confidence for related tasks e.g. MNIST evaluated on EMNIST or CIFAR-10 evaluated on LSUN, where
the image structure is more similar.\\
Finally, an interesting outcome is that ACET reduces the confidence on adversarial examples, see Figure \ref{hist:mnist_main} for an illustration for MNIST, and achieves on all datasets improved AUROC values so that one can
detect more adversarial examples via thresholding the confidence compared to the plain and CEDA models. The improved performance of ACET is to some extent unexpected as we just bias the model towards uniform confidence over all classes far away from the training data, but adversarial examples are still close to the original images. In summary, ACET does improve confidence estimates significantly compared to
the plain model but also compared to CEDA, in particular on adversarial noise and adversarial examples. ACET has also a beneficial effect on adversarial examples which is an interesting side effect and shows in our opinion that the models have become more reliable.

\textbf{Far away high confidence predictions:}
Theorem \ref{th:relu} states that ReLU networks always attain high confidence predictions far away from the training data. The two network architectures used in this paper are ReLU networks. It is thus interesting to investigate if the confidence-enhanced training, ACET, makes it harder to reach high confidence than for the plain model. We do the following
experiment: we take uniform random noise images $x$ and then search for the smallest $\alpha$ such that the classifier attains $99.9\%$ confidence on $\alpha x$. This is exactly the construction from Theorem \ref{th:relu} and the result can be found in Table \ref{tab:alpha_main}. \\
We observe that indeed the required upscaling factor $\alpha$ is significantly higher for ACET than for the plain models which implies that
our method also influences the network far away from the training data.
This also shows that even training methods explicitly aiming at counteracting the phenomenon of high confidence predictions far away from the training data, cannot prevent this. We also discuss in the appendix a similar experiment, but with the projection to $[0,1]^d$.



\section{Conclusion}
We have shown in this paper that the problem of arbitrarily high confidence predictions of  ReLU networks far away from the training data cannot be avoided even with modifications
like temperature rescaling \cite{GuoEtAl2017}. It is an inherent problem of the neural network architecture and thus can only be resolved by changing the architecture. On the other hand we have shown that CEDA and in particular ACET are a good way to reach much better confidence estimates for image data.
CEDA and ACET can be directly used for any model with little implementation overhead. For the future it would be desirable to have network architectures which have provably the property that far away from the training data the confidence is uniform over the classes: the network knows when it does not know.

\section*{Acknowledgements}
M.H. and J.B. acknowledge support from the BMBF
through the T{\"u}bingen AI Center (FKZ: 01IS18039A) and by the DFG TRR 248, project number 389792660 
and the DFG Excellence Cluster
``Machine Learning - New Perspectives for Science'', EXC 2064/1, project number 390727645.


{\small
\bibliographystyle{ieee}
\bibliography{Literatur}}

\clearpage

\begin{center}
	\Large\textbf{Appendix}
\end{center}

\setcounter{section}{0}
\renewcommand{\thesection}{\Alph{section}}

\section{Proofs}

\setcounter{section}{3}
\renewcommand{\thesection}{\arabic{section}}

\begin{lemma}\label{le:rays}
	Let $\{Q_i\}_{l=1}^R$ be the set of linear regions associated to the ReLU-classifier $f:\R^d \rightarrow \R^K$. For any $x \in \R^d$
	there exists $\alpha \in \R$ with $\alpha>0$ and $t \in \{1,\ldots,R\}$ such that $\beta x \in Q_t$ for all $\beta \geq \alpha$.
\end{lemma}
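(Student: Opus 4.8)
The plan is to reduce the statement to an elementary fact about covering the half-line $[0,\infty)$ by finitely many intervals. The geometric picture is that the ray emanating from the origin through $x$ meets only finitely many of the linear regions, and because each region is a convex polytope, the ray can occupy each region along at most one interval of parameter values; since the ray is unbounded but there are only $R$ regions, one of them must swallow an entire tail of the ray. First I would dispose of the degenerate case $x=0$: then $\beta x=0$ for every $\beta$, the origin lies in some $Q_t$, and any $\alpha>0$ works. So assume $x\neq 0$ and consider the ray $\{\beta x : \beta \geq 0\}$.

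For each $t\in\{1,\ldots,R\}$ I would introduce the parameter set $I_t=\{\beta \geq 0 : \beta x \in Q_t\}$ and show that it is an interval. This is the one place where the convexity of the polytopes $Q_t$ is essential: if $\beta_1,\beta_2\in I_t$ with $\beta_1 \leq \beta_2$ and $\beta\in[\beta_1,\beta_2]$, then $\beta x$ is a convex combination of $\beta_1 x$ and $\beta_2 x$, both of which lie in $Q_t$, so convexity of $Q_t$ gives $\beta x\in Q_t$ and hence $\beta\in I_t$. Thus each $I_t$ is a convex subset of $[0,\infty)$, i.e. an interval.

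Finally, since $\cup_{t=1}^R Q_t=\R^d$ we have $\cup_{t=1}^R I_t=[0,\infty)$. Were every $I_t$ bounded, then $M:=\max_{t} \sup I_t$ would be a finite maximum over finitely many finite numbers, forcing $\cup_t I_t\subseteq[0,M]$ and contradicting that the union is all of $[0,\infty)$. Hence some $I_t$ is unbounded, and being an unbounded interval inside $[0,\infty)$ it contains a tail $[\alpha,\infty)$ for some $\alpha>0$; for this $t$ and $\alpha$ we conclude $\beta x\in Q_t$ for all $\beta\geq\alpha$, as required. I expect the only subtle point—the main obstacle—to be the justification that each $I_t$ is genuinely an interval, which is exactly where convexity of the linear regions is indispensable: without it the ray could oscillate in and out of a region and the finiteness argument would collapse. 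The remaining steps are routine.
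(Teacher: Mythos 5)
Your proof is correct and rests on exactly the same two ingredients as the paper's: convexity of the polytopal regions forces the parameter set $I_t=\{\beta\geq 0:\beta x\in Q_t\}$ to be an interval, and finiteness of the regions then forces one such interval to contain a tail of $[0,\infty)$. The paper packages this as a proof by contradiction via an infinite sequence of crossing parameters, whereas your direct formulation through the sets $I_t$ is a cleaner rendering of the same argument (and you additionally handle the degenerate case $x=0$ explicitly).
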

\begin{proof}
	Suppose the statement would be false. Then there exist $\{\beta_i\}_{i=1}^\infty$ with $\beta_i\geq 0$, $\beta_i\geq \beta_j$ if $i\leq j$ and 
	$\beta_i \rightarrow \infty$ as $i\rightarrow \infty$ such that for $\gamma \in [\beta_i,\beta_{i+1})$ we have $\gamma x \in Q_{r_i}$ with $r_i \in \{1,\ldots,R\}$ and $r_{i-1}\neq r_i \neq r_{i+1}$. As there are only finitely many regions there exist $i,j \in \N$ with $i<j$ such that $r_i = r_j$, in particular $\beta_i x \in Q_{r_i}$ and $\beta_j x \in Q_{r_i}$.
	However, as the linear regions are convex sets also the line segment $[\beta_i x, \beta_j x] \in Q_{r_i}$. However, that implies $\beta_i=\beta_j$ as neighboring segments are in different regions  which contradicts the assumption. Thus there can only be finitely many $\{\beta_i\}_{i=1}^M$ and the $\{r_i\}_{i=1}^M$ have to be all different, which finishes the proof.
\end{proof}

\begin{theorem}\label{th:relu}
	Let $\R^d=\cup_{l=1}^R Q_l$ and $f(x)=V^l x + a^l$  be the piecewise affine representation of the
	output of a ReLU network on $Q_l$. Suppose that $V^l$ does not contain identical rows for all $l=1,\ldots,R$, then for almost any $x \in \R^d$ and $\epsilon>0$
	there exists an $\alpha>0$ and a class $k \in \{1,\ldots,K\}$ such that for $z=\alpha x$ it holds
	\[ \frac{e^{f_k(z)}}{\sum_{r=1}^K e^{f_r(z)}} \geq 1-\epsilon.\]
	Moreover, $\lim\limits_{\alpha \rightarrow \infty}  \frac{e^{f_k(\alpha x)}}{\sum_{r=1}^K e^{f_r(\alpha x)}} =1$.
\end{theorem}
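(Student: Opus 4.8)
The plan is to combine Lemma \ref{le:rays} with a direct asymptotic analysis of the softmax along the ray $\{\alpha x : \alpha > 0\}$, isolating the single class whose linear coefficient has the largest inner product with $x$. The purpose of the ``almost any $x$'' qualifier and of the no-identical-rows hypothesis is precisely to guarantee that this maximizer is unique, after which the result follows from an elementary limit.

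First I would dispose of the exceptional directions. For each region $l$ and each pair of distinct classes $r \neq r'$, the assumption that $V^l$ has no identical rows means $v^l_r - v^l_{r'} \neq 0$, where $v^l_r$ denotes the $r$-th row of $V^l$; hence $\{x \in \R^d : \inner{v^l_r - v^l_{r'}, x} = 0\}$ is a hyperplane through the origin and thus a Lebesgue null set. Since there are only finitely many regions $R$ and finitely many pairs, the union $B$ of all these hyperplanes is again a null set, so almost every $x \in \R^d$ lies outside $B$. For any such $x$ and for every region $l$, the values $\inner{v^l_r, x}$, $r = 1,\ldots,K$, are then pairwise distinct.

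Next, fix $x \notin B$ and apply Lemma \ref{le:rays} to obtain $\alpha_0 > 0$ and an index $t$ with $\beta x \in Q_t$ for all $\beta \geq \alpha_0$. On this region $f(\beta x) = \beta V^t x + a^t$, so writing $s_r = \inner{v^t_r, x}$ the component is $f_r(\beta x) = \beta s_r + a^t_r$. Because $x \notin B$, the slopes $s_1,\ldots,s_K$ are distinct, so there is a unique $k = \argmax_r s_r$. For every $r \neq k$ we have $s_r - s_k < 0$, whence $f_r(\beta x) - f_k(\beta x) = \beta(s_r - s_k) + (a^t_r - a^t_k) \to -\infty$ as $\beta \to \infty$.

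Finally I would rewrite the softmax for class $k$ as $\big(1 + \sum_{r \neq k} e^{f_r(\beta x) - f_k(\beta x)}\big)^{-1}$ and let $\beta \to \infty$: each exponential tends to $0$, so the whole expression tends to $1$, which is exactly the claimed limit. Given $\epsilon > 0$, this limit guarantees a threshold $\beta^\ast$ beyond which the softmax exceeds $1-\epsilon$; taking $\alpha = \max\{\alpha_0, \beta^\ast\}$ and $z = \alpha x$ proves the first assertion. The only delicate point is the reduction to almost every $x$: one must argue that the dominant class is \emph{strictly} dominant along the asymptotic ray, and the no-identical-rows hypothesis is used exactly to ensure that each exceptional set is a genuine measure-zero hyperplane rather than all of $\R^d$; everything else is a routine limit computation.
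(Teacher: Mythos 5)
Your proposal is correct and follows essentially the same route as the paper's proof: invoke Lemma \ref{le:rays} to land in a single asymptotic region $Q_t$, take $k$ to be the row of $V^t$ maximizing $\inner{v^t_k,x}$ (unique for almost every $x$ by the no-identical-rows assumption), and conclude by writing the softmax as $\bigl(1+\sum_{r\neq k}e^{f_r(\alpha x)-f_k(\alpha x)}\bigr)^{-1}$ and letting $\alpha\to\infty$. Your explicit construction of the exceptional null set as a finite union of hyperplanes over all regions and class pairs is slightly more careful than the paper's one-line remark, but it is the same argument.
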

\begin{proof}
	By Lemma \ref{le:rays} there exists a region $Q_t$ with $t\in \{1,\ldots,R\}$ and $\beta>0$ such that for all $\alpha\geq \beta$ we have  $\alpha x \in Q_t$. 
	Let $f(z)=V^t z + a^t$ be the affine form of the ReLU classifier $f$ on $Q_t$. Let $k^*=\argmax_k \inner{v^t_k,x}$, where $v^t_k$ is the $k$-th row of $V^t$.
	As $V^t$ does not contain identical rows, that is $v^t_l \neq v^t_m$ for $l\neq m$, the maximum is uniquely attained up to a set of measure zero.   
	If the maximum is unique, it holds for sufficiently large $\alpha\geq \beta$
	\begin{align}\label{eq:inequality}
	\inner{v^t_l - v^t_{k^*}, \alpha x} + a^t_l - a^t_{k^*} < 0, \; \forall l \in \{1,\ldots,K\}\backslash \{k^*\}.
	\end{align}
	Thus $\alpha x \in Q_t$ is classified as $k^*$. Moreover,
	\begin{align}
	\frac{e^{f_{k^*}(	\alpha x)}}{\sum_{l=1}^K e^{f_l(\alpha x)}}
	&= \frac{e^{\inner{v^t_{k^*},\alpha x}+a^t_k}}{\sum_{l=1}^K e^{\inner{v^t_l,\alpha x}+a^t_l}}\\
	&=\frac{1}{1 + \sum_{l\neq k^*}^K e^{\inner{v^t_l-v^t_{k^*},\alpha x}+a^t_l-a^t_k}}.
	\end{align}
	By inequality \eqref{eq:inequality} all the terms in the exponential are negative and thus by upscaling $\alpha$, using $\inner{v^t_{k^*},x} > \inner{v^t_l,x}$ for all $l\neq k^*$,
	we can get the exponential term arbitrarily close to $0$. In particular, 
	\[ \lim\limits_{\alpha \rightarrow \infty} \frac{1}{1 + \sum_{l\neq k}^K e^{\inner{v^t_l-v^t_{k^*},\alpha x}+a^t_l-a^t_k}} =1.\]
\end{proof}

\begin{theorem}\label{th:RBF}
	Let $f_k(x)=\sum_{l=1}^N \alpha_{kl} e^{-\gamma \norm{x-x_l}^2_2}$, $k=1,\ldots,K$ be an RBF-network trained with cross-entropy loss on the training data $(x_i,y_i)_{i=1}^N$. We define
	$r_{\min}=\minop_{l=1,\ldots,N} \norm{x-x_l}_2$ and $\alpha=\maxop_{r,k} \sum_{l=1}^N |\alpha_{rl}-\alpha_{kl}|$. If $\epsilon>0$ and 
	\[ r^2_{\min} \geq \frac{1}{\gamma}\log\Big(\frac{\alpha}{\log(1+K\epsilon)}\Big),\]
	then for all $k=1,\ldots,K$,
	\[ \frac{1}{K}-\epsilon \; \leq\; \frac{e^{f_k(x)}}{\sum_{r=1}^K e^{f_r(x)}} \; \leq \; \frac{1}{K}+\epsilon.\] 
\end{theorem}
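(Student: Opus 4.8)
The plan is to exploit that, far away from the data, all class outputs $f_k(x)$ become nearly equal, so that the softmax collapses to the uniform distribution over the $K$ classes. First I would control the pairwise differences of the logits. For any two classes $k,r$ we have $f_k(x)-f_r(x)=\sum_{l=1}^N(\alpha_{kl}-\alpha_{rl})\,e^{-\gamma\norm{x-x_l}_2^2}$, and since $\norm{x-x_l}_2\ge r_{\min}$ for every $l$, each Gaussian factor is bounded by $e^{-\gamma r_{\min}^2}$. Pulling this common bound out and invoking the definition of $\alpha$ then yields $\abs{f_k(x)-f_r(x)}\le \alpha\,e^{-\gamma r_{\min}^2}=:\delta$, uniformly over all pairs $k,r$.

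Next I would convert this uniform closeness of the logits into closeness of the softmax to $1/K$. Writing the confidence of class $k$ as $\frac{e^{f_k(x)}}{\sum_{r=1}^K e^{f_r(x)}}=\frac{1}{\sum_{r=1}^K e^{f_r(x)-f_k(x)}}$ and using $\abs{f_r(x)-f_k(x)}\le\delta$, each summand in the denominator satisfies $e^{-\delta}\le e^{f_r(x)-f_k(x)}\le e^{\delta}$, so the whole denominator lies in the interval $[\,K e^{-\delta},\,K e^{\delta}\,]$. Inverting gives the two-sided bound $\frac{e^{-\delta}}{K}\le \frac{e^{f_k(x)}}{\sum_{r=1}^K e^{f_r(x)}}\le \frac{e^{\delta}}{K}$, which holds for every $k$.

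It remains to translate the hypothesis on $r_{\min}$ into the claimed $\epsilon$-bound. The assumption $r_{\min}^2\ge \frac{1}{\gamma}\log\!\big(\frac{\alpha}{\log(1+K\epsilon)}\big)$ is, after exponentiating, exactly equivalent to $\delta\le\log(1+K\epsilon)$, i.e. $e^{\delta}\le 1+K\epsilon$; this immediately delivers the upper bound $\frac{e^{\delta}}{K}\le\frac1K+\epsilon$. The only step requiring a moment of care — and the closest thing to an obstacle in an otherwise routine argument — is the lower bound, since the single hypothesis must also control $e^{-\delta}$. Here I would observe that $(1+K\epsilon)(1-K\epsilon)=1-K^2\epsilon^2<1$ forces $\log(1+K\epsilon)\le-\log(1-K\epsilon)$, so the same inequality $\delta\le\log(1+K\epsilon)$ already guarantees $e^{-\delta}\ge 1-K\epsilon$ and hence $\frac{e^{-\delta}}{K}\ge\frac1K-\epsilon$ (and if $K\epsilon\ge1$ the lower bound is vacuous, as confidences are nonnegative). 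Combining the two inequalities closes the proof; note that the bound is entirely dimension-free, the sole role of ``far away'' being to drive the single scalar $\delta$ to zero.
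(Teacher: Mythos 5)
Your proof is correct and follows essentially the same route as the paper: rewrite the softmax as $\bigl(\sum_{r=1}^K e^{f_r(x)-f_k(x)}\bigr)^{-1}$, bound all logit differences by $\delta=\alpha e^{-\gamma r_{\min}^2}\le\log(1+K\epsilon)$, and use $(1-K\epsilon)(1+K\epsilon)\le 1$ to pass from $e^{\pm\delta}/K$ to $\tfrac1K\pm\epsilon$. Your explicit handling of the vacuous case $K\epsilon\ge 1$ is a small tidiness the paper leaves implicit, but the argument is otherwise identical.
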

\begin{proof}
	It holds $\frac{e^{f_k(x)}}{\sum_{r=1}^K e^{f_r(x)}}=\frac{1}{\sum_{r=1}^K e^{f_r(x)-f_k(x)}}$. With
	\begin{align}
	|f_r(x)-f_k(x)| &=\big| \sum_l (\alpha_{rl}-\alpha_{kl})e^{-\gamma \norm{x-x_l}^2_2}\big|\\
	&\leq e^{-\gamma r_{\min}^2} \sum_l |\alpha_{rl}-\alpha_{kl}|\\
	&\leq e^{-\gamma r_{\min}^2} \alpha \leq \log(1+K\epsilon),
	\end{align}
	where the last inequality follows by the condition on $r_{\min}$. We get
	\begin{align}
	\frac{1}{\sum_{r=1}^K e^{f_r(x)-f_k(x)}} &\geq \frac{1}{\sum_{r=1}^K e^{|f_r(x)-f_k(x)|}}\\
	&\geq \frac{1}{K e^{\alpha e^{-\gamma r_{\min}^2}}}\\
	&\geq \frac{1}{K}\frac{1}{1+K\epsilon} \geq \frac{1}{K}-\epsilon,
	\end{align}
	where we have used in the third inequality the condition on $r_{\min}^2$ and in the last step we use
	$1 \geq (1-K\epsilon)(1+K\epsilon)=1-K^2\epsilon^2$. Similarly, we get
	\begin{align*}
	\frac{1}{\sum_{r=1}^K e^{f_r(x)-f_k(x)}} &\leq \frac{1}{\sum_{r=1}^K e^{-|f_r(x)-f_k(x)|}}\\
	&\leq \frac{1}{K e^{-\alpha e^{-\gamma r_{\min}^2}}}\\
	&\leq \frac{1}{K} (1+K\epsilon) \leq \frac{1}{K}+\epsilon.
	\end{align*}
	This finishes the proof.
\end{proof}

\setcounter{section}{1}
\renewcommand{\thesection}{\Alph{section}}

\section{Additional $\alpha$-scaling experiments}
We also do a similar $\alpha$-scaling experiment, but with the projection to the image domain ($[0, 1]^d$ box), and report the percentage of overconfident predictions (higher than 95\% confidence) in Table \ref{tab:alpha_main}, second row. We observe that such a technique can lead to overconfident predictions even in the image domain for the plain models. At the same time, on all datasets, the ACET models have a significantly smaller fraction of overconfident examples compared to the plain models.


\section{The effect of Adversarial Confidence Enhanced Training}

In this section we compare predictions of the plain model trained on MNIST (Figure \ref{fig:mnist_plain_most_least_confident}) and the model trained with ACET (Figure \ref{fig:mnist_acet_most_least_confident}). We analyze the images that receive the lowest maximum confidence on the original dataset (MNIST), and the highest maximum confidence on the two datasets that were used for evaluation (EMNIST, grayCIFAR-10).

\textbf{Evaluated on MNIST}: We observe that for both models the lowest maximum confidence corresponds to hard input images that are either discontinous, rotated or simply ambiguous.

\textbf{Evaluated on EMNIST}: Note that some handwritten letters from EMNIST, e.g. 'o' and 'i' may look exactly the same as digits '0' and '1'. Therefore, one should not expect that an ideal model assigns uniform confidences to all EMNIST images.  For Figure \ref{fig:mnist_plain_most_least_confident} and Figure \ref{fig:mnist_acet_most_least_confident} we consider predictions on letters that in general do not look exactly like digits ('a', 'b', 'c', 'd'). We observe that the images with the highest maximum confidence correspond to the handwritten letters that \textit{resemble} digits, so the predictions of both models are justified.

\textbf{Evaluated on Grayscale CIFAR-10}: This dataset consists of the images that are clearly distinct from digits. Thus, one can expect uniform confidences on such images, which is achieved by the ACET model (Table 1), but not with the plain model. The mean maximum confidence of the ACET model is close to 10\%, with several individual images that are scored with up to 40.41\% confidence. Note, that this is much better than for the plain model, which assigns up to 99.60\% confidence for the images that have nothing to do with digits. This result is particularly interesting, since the ACET model has not been trained on grayCIFAR-10 examples, and yet it shows much better confidence calibration for out-of-distribution samples.

\def\mnistmnistfolder{}
\def\mnistmnist{
	mnistplainmnist_top-1_-_Pred_1_true_2_with_p=37_58/1/37.58,
	mnistplainmnist_top-2_-_Pred_1_true_6_with_p=39_72/1/39.72,
	mnistplainmnist_top-3_-_Pred_7_true_1_with_p=40_49/7/40.49,
	mnistplainmnist_top-4_-_Pred_7_true_7_with_p=40_54/7/40.54,
	mnistplainmnist_top-5_-_Pred_5_true_6_with_p=43_31/5/43.31,
	mnistplainmnist_top-6_-_Pred_9_true_9_with_p=45_73/9/45.73,
	mnistplainmnist_top-7_-_Pred_1_true_7_with_p=47_86/1/47.86}

\def\mnistemnistfolder{}
\def\mnistemnist{
	mnistplainemnist_top-1_-_Pred_0_true_4_with_p=100_00/0/100.0,
	mnistplainemnist_top-2_-_Pred_0_true_4_with_p=100_00/0/100.0,
	mnistplainemnist_top-3_-_Pred_6_true_2_with_p=100_00/6/100.0,
	mnistplainemnist_top-4_-_Pred_6_true_2_with_p=100_00/6/100.0,
	mnistplainemnist_top-5_-_Pred_0_true_4_with_p=100_00/0/100.0,
	mnistplainemnist_top-6_-_Pred_0_true_4_with_p=100_00/0/100.0,
	mnistplainemnist_top-7_-_Pred_0_true_4_with_p=100_00/0/100.0}

\def\mnistcifarfolder{}
\def\mnistcifar{
	mnistplaincifar_top-1_-_Pred_2_true_7_with_p=99_60/2/99.60,
	mnistplaincifar_top-2_-_Pred_2_true_6_with_p=99_13/2/99.13,
	mnistplaincifar_top-3_-_Pred_7_true_5_with_p=98_99/7/98.99,
	mnistplaincifar_top-4_-_Pred_6_true_3_with_p=98_83/6/98.83,
	mnistplaincifar_top-5_-_Pred_2_true_0_with_p=98_76/2/98.76,
	mnistplaincifar_top-6_-_Pred_7_true_9_with_p=98_65/7/98.65,
	mnistplaincifar_top-7_-_Pred_6_true_3_with_p=98_48/6/98.48}

\newcounter{leastmostcounter}
\begin{center}
	\begin{figure*}[h]
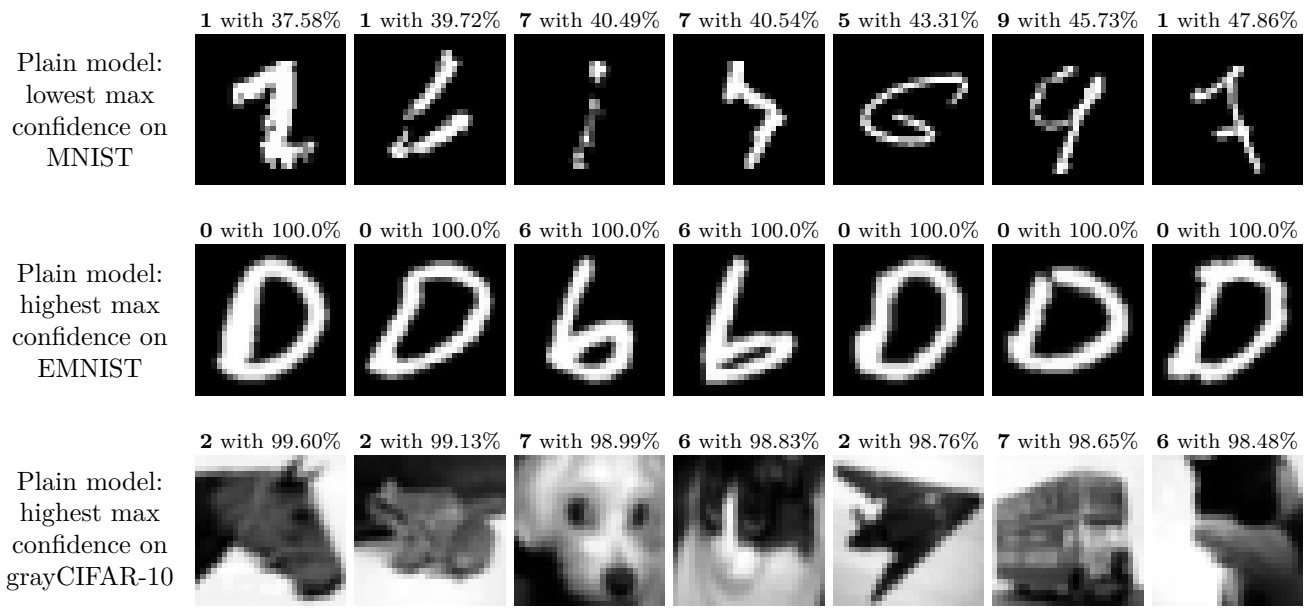

		\begin{tikzpicture}	
		\node[text width=2.4cm, align=center] at (-2.4, 0) {Plain model: \\ lowest max \\ confidence on \\ MNIST};
		\node[text width=2.4cm, align=center] at (-2.4, -2.8) {Plain model: \\ highest max \\ confidence on \\ EMNIST};
		\node[text width=2.4cm, align=center] at (-2.4, -2.8*2) {Plain model: \\ highest max \\ confidence on \\ grayCIFAR-10};
		
		\footnotesize
		\foreach \fname / \class / \conf  in \mnistmnist {    
			\node[inner sep=0pt] (s0) at (2.12*\value{leastmostcounter},1.2)    {\textbf{\class} with \conf\%};
			\node[inner sep=0pt] (s0) at (2.12*\value{leastmostcounter},0)      {\includegraphics[width=2.0cm]{\fname.png}};
			\stepcounter{leastmostcounter}
		}
		
		\setcounter{leastmostcounter}{0}
		\foreach   \fname / \class / \conf in \mnistemnist {      
			\node[inner sep=0pt] (s1) at (2.12*\value{leastmostcounter},-1.6)    {\textbf{\class} with \conf\%};  
			\node[inner sep=0pt] (s1) at (2.12*\value{leastmostcounter},-2.8)      {\includegraphics[width=2.0cm]{\fname.png}};
			\stepcounter{leastmostcounter}
		}
		
		\setcounter{leastmostcounter}{0}
		\foreach   \fname / \class / \conf in \mnistcifar {        
			\node[inner sep=0pt] (s2) at (2.12*\value{leastmostcounter},-4.4)    {\textbf{\class} with \conf\%};
			\node[inner sep=0pt] (s2) at (2.12*\value{leastmostcounter},-2.8*2)      {\includegraphics[width=2.0cm]{\fname.png}};
			\stepcounter{leastmostcounter}
		}
		
		\end{tikzpicture}
		\caption{\label{fig:mnist_plain_most_least_confident}
			Top Row: predictions of the plain MNIST model with the lowest maximum confidence.
			Middle Row: predictions of the plain MNIST model on letters 'a', 'b', 'c', 'd' of EMNIST with the highest maximum confidence.
			Bottom Row: predictions of the plain MNIST model on the grayscale version of CIFAR-10 with the highest maximum confidence.
			Note that although the predictions on EMNIST are mostly justified, the predictions on CIFAR-10 are overconfident on the images that have no resemblance to digits.
		}
	\end{figure*}
\end{center}

\def\mnistmnistfolder{}
\def\mnistmnist{
	mnistacetmnist_top-1_-_Pred_1_true_6_with_p=26_80/1/26.80,
	mnistacetmnist_top-2_-_Pred_1_true_3_with_p=35_73/1/35.73,
	mnistacetmnist_top-3_-_Pred_3_true_7_with_p=36_21/3/36.21,
	mnistacetmnist_top-4_-_Pred_7_true_1_with_p=36_83/7/36.83,
	mnistacetmnist_top-5_-_Pred_3_true_9_with_p=38_00/3/38.00,
	mnistacetmnist_top-6_-_Pred_3_true_9_with_p=38_91/3/38.91,
	mnistacetmnist_top-7_-_Pred_2_true_2_with_p=39_86/2/39.86}

\def\mnistemnistfolder{}
\def\mnistemnist{
	mnistacetemnist_top-1_-_Pred_2_true_1_with_p=100_00/2/100.0,
	mnistacetemnist_top-2_-_Pred_6_true_2_with_p=100_00/6/100.0,
	mnistacetemnist_top-3_-_Pred_2_true_1_with_p=99_99/2/99.99,
	mnistacetemnist_top-4_-_Pred_6_true_2_with_p=99_99/6/99.99,
	mnistacetemnist_top-5_-_Pred_6_true_2_with_p=99_99/6/99.99,
	mnistacetemnist_top-6_-_Pred_6_true_2_with_p=99_99/6/99.99,
	mnistacetemnist_top-7_-_Pred_0_true_4_with_p=99_99/0/99.99}

\def\mnistcifarfolder{}
\def\mnistcifar{
	mnistacetcifar_top-1_-_Pred_0_true_1_with_p=40_41/0/40.41,
	mnistacetcifar_top-2_-_Pred_4_true_8_with_p=38_24/4/38.24,
	mnistacetcifar_top-3_-_Pred_0_true_6_with_p=36_13/0/36.13,
	mnistacetcifar_top-4_-_Pred_0_true_2_with_p=34_91/0/34.91,
	mnistacetcifar_top-5_-_Pred_0_true_3_with_p=34_37/0/34.37,
	mnistacetcifar_top-6_-_Pred_0_true_7_with_p=33_58/0/33.58,
	mnistacetcifar_top-7_-_Pred_7_true_2_with_p=32_36/7/32.36}

\setcounter{leastmostcounter}{0}
\begin{center}
	\begin{figure*}[h]
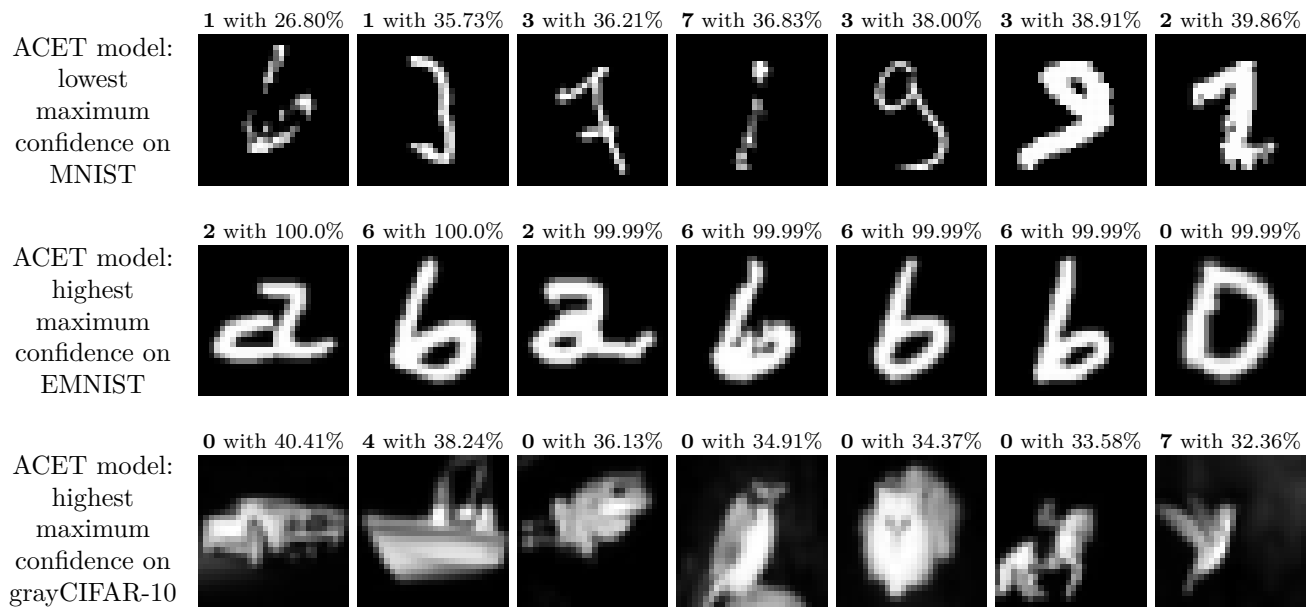

		\begin{tikzpicture}	
		\node[text width=2.4cm, align=center] at (-2.4, 0) {ACET model: \\ lowest maximum \\ confidence on \\ MNIST};
		\node[text width=2.4cm, align=center] at (-2.4, -2.8) {ACET model: \\ highest maximum \\ confidence on \\ EMNIST};
		\node[text width=2.4cm, align=center] at (-2.4, -2.8*2) {ACET model: \\ highest maximum \\ confidence on \\ grayCIFAR-10};
		
		\footnotesize
		\foreach \fname / \class / \conf  in \mnistmnist {    
			\node[inner sep=0pt] (s0) at (2.12*\value{leastmostcounter},1.2)    {\textbf{\class} with \conf\%};
			\node[inner sep=0pt] (s0) at (2.12*\value{leastmostcounter},0)      {\includegraphics[width=2.0cm]{\fname.png}};
			\stepcounter{leastmostcounter}
		}
		
		\setcounter{leastmostcounter}{0}
		\foreach   \fname / \class / \conf in \mnistemnist {      
			\node[inner sep=0pt] (s1) at (2.12*\value{leastmostcounter},-1.6)    {\textbf{\class} with \conf\%};  
			\node[inner sep=0pt] (s1) at (2.12*\value{leastmostcounter},-2.8)      {\includegraphics[width=2.0cm]{\fname.png}};
			\stepcounter{leastmostcounter}
		}
		
		\setcounter{leastmostcounter}{0}
		\foreach   \fname / \class / \conf in \mnistcifar {        
			\node[inner sep=0pt] (s2) at (2.12*\value{leastmostcounter},-4.4)    {\textbf{\class} with \conf\%};
			\node[inner sep=0pt] (s2) at (2.12*\value{leastmostcounter},-2.8*2)      {\includegraphics[width=2.0cm]{\fname.png}};
			\stepcounter{leastmostcounter}
		}
		
		\end{tikzpicture}
		\caption{\label{fig:mnist_acet_most_least_confident}
			Top Row: predictions of the ACET MNIST model with the lowest maximum confidence.
			Middle Row: predictions of the ACET MNIST model on letters 'a', 'b', 'c', 'd' of EMNIST with the highest maximum confidence.
			Bottom Row: predictions of the ACET MNIST model on the grayscale version of CIFAR-10 with the highest maximum confidence.
			Note that for the ACET model the predictions on both EMNIST and grayCIFAR-10 are now justified.
		}
	\end{figure*}
\end{center}

\section{ROC curves}
We show the ROC curves for the binary classification task of separating \textit{True} (in-distribution) images from \textit{False} (out-distribution) images. These correspond to the AUROC values (area under the ROC curve) reported in Table 1 in the main paper. As stated in the paper the separation of in-distribution from out-distribution is done by thresholding the maximal
confidence value over all classes taken from the original multi-class problem.
Note that the ROC curve shows on the vertical axis the True Positive Rate (TPR), and the horizontal axis is the False Positive Rate (FPR).
Thus the FPR@95\%TPR value can be directly read off from the ROC curve as the FPR value achieved for 0.95 TPR. 
Note that a value of $1$ of AUROC corresponds to a perfect classifier. A value below 0.5 means that the
ordering is reversed: out-distribution images achieve on average higher confidence than the in-distribution images. The worst case is an AUROC of zero, in which case all
out-distribution images achieve a higher confidence value than the in-distribution images.

\subsection{ROC curves for the models trained on MNIST}
In the ROC curves for the plain, CEDA and ACET models for MNIST that are presented in Figure \ref{roc:mnist}, the different grades of improvements for the six evaluation datasets can be observed. For noise, the curve of the plain model is already quite close to the upper left corner (which means high AUROC), while for the models trained with CEDA and ACET, it actually reaches that corner, which is the ideal case. For adversarial noise, the plain model is worse than a random classifier, which manifests itself in the fact that the ROC curve runs below the diagonal. While CEDA is better, ACET achieves a very good result here as well.

\subsection{ROC curves for the models trained on SVHN}
CEDA and ACET significantly outperform plain training in all metrics. While CEDA and ACET perform similar on 
CIFAR-10, LSUN and noise, ACET outperforms CEDA clearly on adversarial noise and adversarial samples.

\subsection{ROC curves for the models trained on \mbox{CIFAR-10}}
The ROC curves for CIFAR10 show that this dataset is harder than MNIST or SVHN. However, CEDA and ACET improve significantly on SVHN.
For LSUN even plain training is slightly better (only time for all three datasets). However, on noise and adversarial noise ACET outperforms 
all other methods.

\subsection{ROC curves for the models trained on \mbox{CIFAR-100}}
Qualitatively, on CIFAR-100, we observe the same results as for CIFAR-10. Note that the use of the confidences to distinguish between in- and out-distribution examples generally works worse here. This might be attributed to the fact that CIFAR-100 has considerably more classes, and a higher test error. Therefore, the in- and out-distribution confidences are more likely to overlap.

\begin{center}
	\begin{figure*}[h]
		\begin{tabular}{@{\hskip-.5mm}c@{\hskip-.5mm}@{\hskip-.5mm}c@{\hskip-.5mm}@{\hskip-.5mm}c@{\hskip-.5mm}}
			\textbf{Plain} & \textbf{CEDA} & \textbf{ACET} \\
			\raisebox{-.5\height}{\includegraphics[width=0.72\columnwidth]{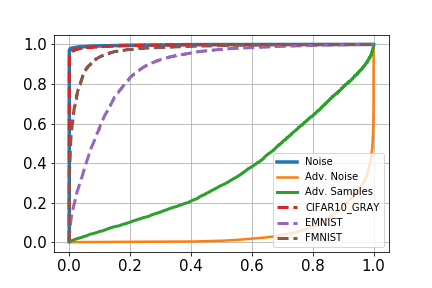}}&
			\raisebox{-.5\height}{\includegraphics[width=0.72\columnwidth]{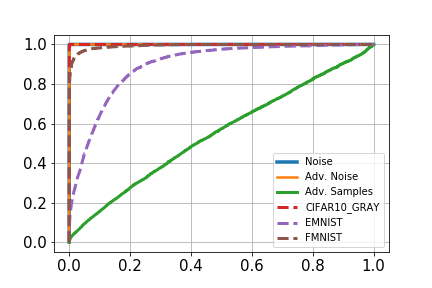}}&
			\raisebox{-.5\height}{\includegraphics[width=0.72\columnwidth]{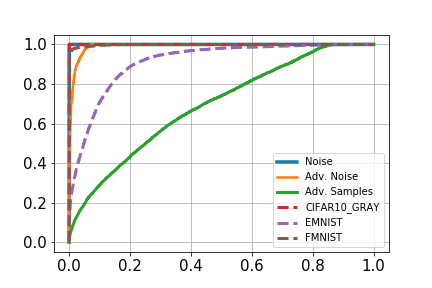}}	
		\end{tabular}
		\begin{center}
			\caption{\label{roc:mnist} ROC curves of the MNIST models on the evaluation datasets.
			}
		\end{center}
	\end{figure*}
\end{center}
\begin{center}
	\begin{figure*}[h]
		\begin{tabular}{@{\hskip-.5mm}c@{\hskip-.5mm}@{\hskip-.5mm}c@{\hskip-.5mm}@{\hskip-.5mm}c@{\hskip-.5mm}}
			\textbf{Plain} & \textbf{CEDA} & \textbf{ACET} \\
			\raisebox{-.5\height}{\includegraphics[width=0.72\columnwidth]{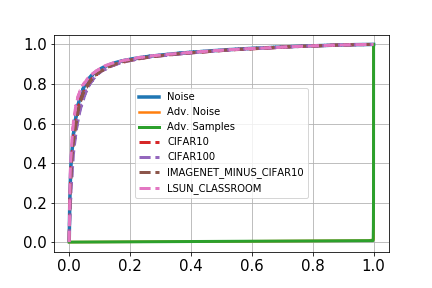}}&
			\raisebox{-.5\height}{\includegraphics[width=0.72\columnwidth]{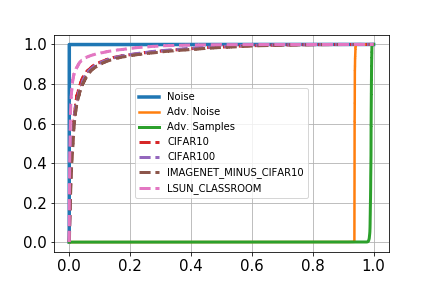}}&
			\raisebox{-.5\height}{\includegraphics[width=0.72\columnwidth]{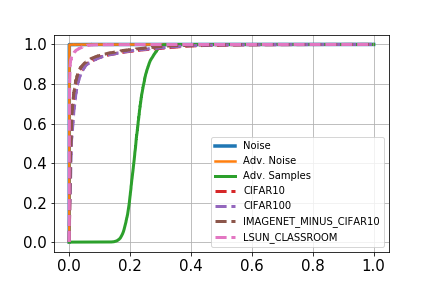}}	
		\end{tabular}
		\begin{center}
			\caption{\label{roc:svhn} ROC curves of the SVHN models on the evaluation datasets.
			}
		\end{center}
	\end{figure*}
\end{center}
\begin{center}
	\begin{figure*}[h]
		\begin{tabular}{@{\hskip-.5mm}c@{\hskip-.5mm}@{\hskip-.5mm}c@{\hskip-.5mm}@{\hskip-.5mm}c@{\hskip-.5mm}}
			\textbf{Plain} & \textbf{CEDA} & \textbf{ACET} \\
			\raisebox{-.5\height}{\includegraphics[width=0.72\columnwidth]{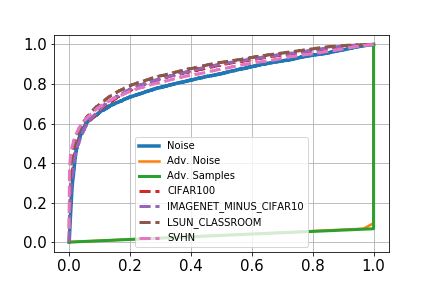}}&
			\raisebox{-.5\height}{\includegraphics[width=0.72\columnwidth]{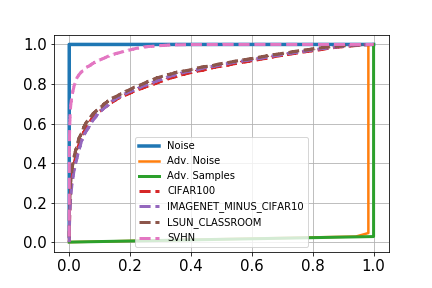}}&
			\raisebox{-.5\height}{\includegraphics[width=0.72\columnwidth]{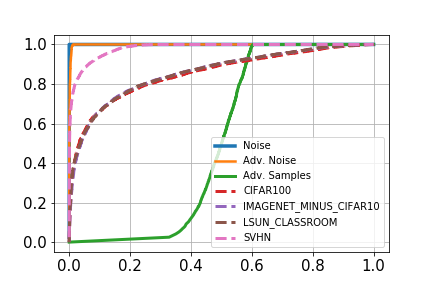}}	
		\end{tabular}
		\begin{center}
			\caption{\label{roc:cifar} ROC curves of the CIFAR-10 models on the evaluation datasets.
			}
		\end{center}
	\end{figure*}
\end{center}
\begin{center}
	\begin{figure*}[h]
		\begin{tabular}{@{\hskip-.5mm}c@{\hskip-.5mm}@{\hskip-.5mm}c@{\hskip-.5mm}@{\hskip-.5mm}c@{\hskip-.5mm}}
			\textbf{Plain} & \textbf{CEDA} & \textbf{ACET} \\
			\raisebox{-.5\height}{\includegraphics[width=0.72\columnwidth]{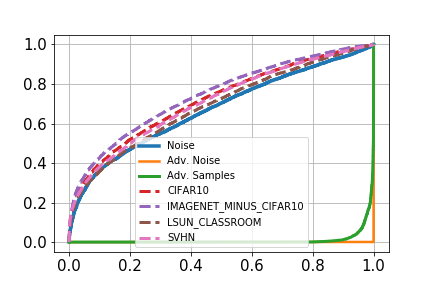}}&
			\raisebox{-.5\height}{\includegraphics[width=0.72\columnwidth]{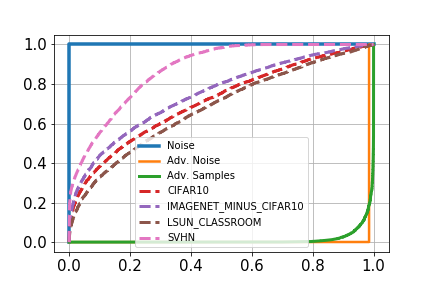}}&
			\raisebox{-.5\height}{\includegraphics[width=0.72\columnwidth]{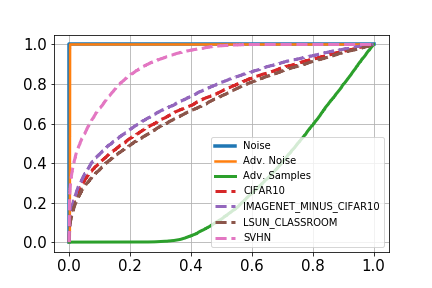}}	
		\end{tabular}
		\begin{center}
			\caption{\label{roc:c100} ROC curves of the CIFAR-100 models on the evaluation datasets.
			}
		\end{center}
	\end{figure*}
\end{center}

\section{Histograms of confidence values}
As the AUROC or the FPR@95\%TPR just tell us how well the confidence values of in-distribution and out-distribution are ordered, we also report
the histograms of achieved confidence values on the original dataset (in-distribution) on which it was trained and the different evaluation datasets.
The histograms show how many times the maximum confidence for test images have certain values between minimal possible $0.1$ ($0.01$ for CIFAR-100) and maximal possible $1.0$. They give a more detailed picture than the single numbers for mean maximum confidence, area under ROC and FPR@95\% TPR.

\subsection{Histograms of confidence values for models trained on MNIST}
As visible in the top row of Figure \ref{hist:mnist-app}, the confidence values for clean MNIST test images don't change significantly for CEDA and ACET.
For FMNIST, gray CIFAR-10 and Noise inputs, the maximum confidences of CEDA are generally shifted to lower values, and those of ACET even more so. For EMNIST, the same effect is observable, though much weaker due to the similarity of characters and digits. For adversarial noise, both CEDA and ACET are very successful in lowering the confidences, with most predictions around 10\% confidence.
As discussed in the main paper, CEDA is not very beneficial for adversarial images, while ACET slightly lowers its confidence to an average value of 85.4\% here.

\subsection{Histograms of confidence values for models trained on SVHN}
Figure \ref{hist:svhn} shows that both CEDA and ACET assign lower confidences to the out-of-distribution samples from SVHN house numbers and LSUN classroom examples. CEDA and ACET, as expected, also signficantly improve on noise samples. While a large fraction of adversarial samples/noise still achieve high confidence values, our ACET trained model is the only one that lowers the confidences for adversarial noise and adversarial samples significantly.

\subsection{Histograms of confidence values for models trained on CIFAR-10}
In Figure \ref{hist:cifar10}, CEDA and ACET lower significantly the confidence on noise, and ACET shows an improvement for adversarial noise, which fools the plain and CEDA models completely. For CIFAR-10, plain and CEDA models yield very high confidence values on adversarial images, while for ACET model the confidence is reduced.
Additionally, on SVHN, we observe a shift towards lower confidence for CEDA and ACET compared to the plain model.

\subsection{Histograms of confidence values for models trained on CIFAR-100}
In Figure \ref{hist:cifar100}, we see similar results to the other datasets. It is noticable in the histograms that for adversarial noise, the deployed attack either achieves 100\% confidence or no improvement at all. For CEDA, the attack succeeds in most cases, and for ACET only rarely.

\clearpage

\begin{figure*}[ht]
	\begin{center}
		\begin{tabular}[t]{cccc}
			\textbf{Dataset} & \textbf{Plain} & \textbf{CEDA} & \textbf{ACET}\\
			\begin{tabular}{c}
				\textbf{MNIST}
			\end{tabular} &
			\raisebox{-.5\height}{\includegraphics[width=0.5\columnwidth]{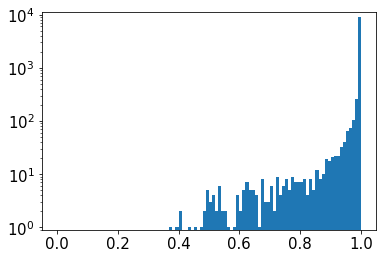}}&
			\raisebox{-.5\height}{\includegraphics[width=0.5\columnwidth]{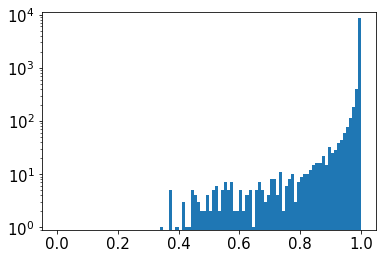}}&
			\raisebox{-.5\height}{\includegraphics[width=0.5\columnwidth]{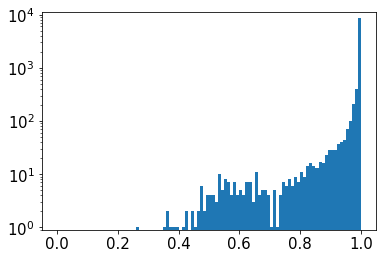}}\\
			\begin{tabular}{c}
				FMNIST 
			\end{tabular} &
			\raisebox{-.5\height}{\includegraphics[width=0.5\columnwidth]{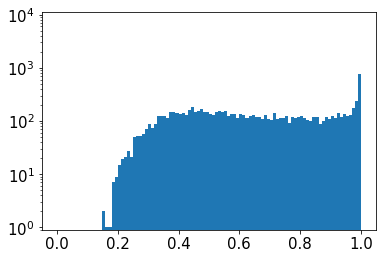}}&
			\raisebox{-.5\height}{\includegraphics[width=0.5\columnwidth]{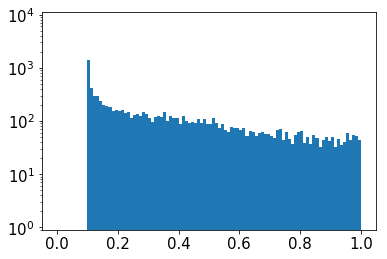}}&
			\raisebox{-.5\height}{\includegraphics[width=0.5\columnwidth]{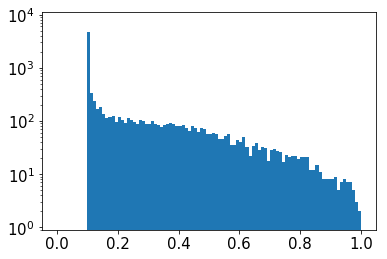}}\\
			\begin{tabular}{c}
				EMNIST 
			\end{tabular} &
			\raisebox{-.5\height}{\includegraphics[width=0.5\columnwidth]{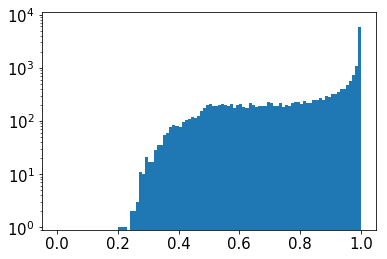}}&
			\raisebox{-.5\height}{\includegraphics[width=0.5\columnwidth]{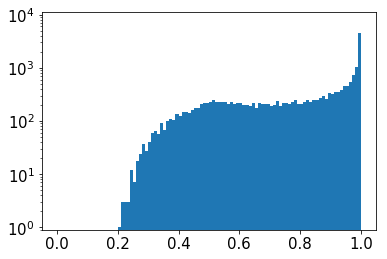}}&
			\raisebox{-.5\height}{\includegraphics[width=0.5\columnwidth]{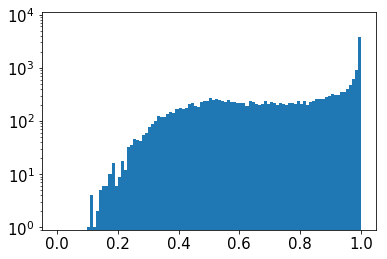}}\\
			\begin{tabular}{c}
				Gray \\ CIFAR-10
			\end{tabular} &
			\raisebox{-.5\height}{\includegraphics[width=0.5\columnwidth]{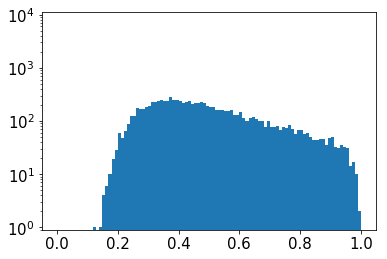}}&
			\raisebox{-.5\height}{\includegraphics[width=0.5\columnwidth]{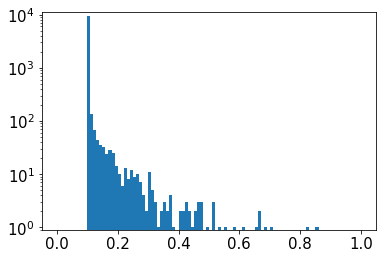}}&
			\raisebox{-.5\height}{\includegraphics[width=0.5\columnwidth]{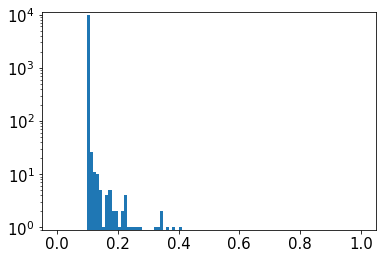}}\\
			\begin{tabular}{c}
				Noise
			\end{tabular} &
			\raisebox{-.5\height}{\includegraphics[width=0.5\columnwidth]{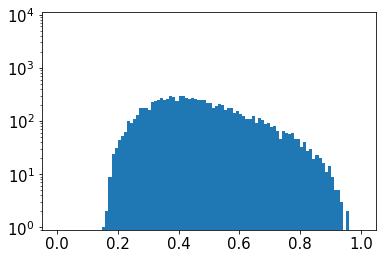}}&
			\raisebox{-.5\height}{\includegraphics[width=0.5\columnwidth]{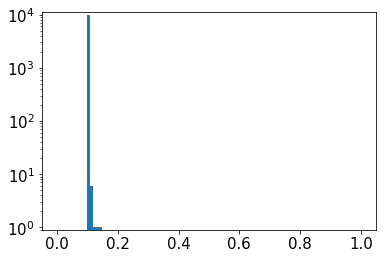}}&
			\raisebox{-.5\height}{\includegraphics[width=0.5\columnwidth]{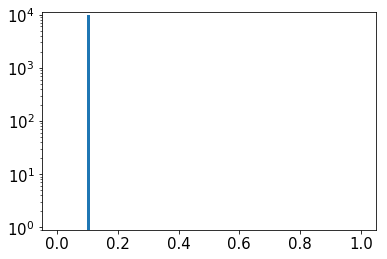}}\\
			\begin{tabular}{c}
				Adversarial \\ Noise
			\end{tabular} &
			\raisebox{-.5\height}{\includegraphics[width=0.5\columnwidth]{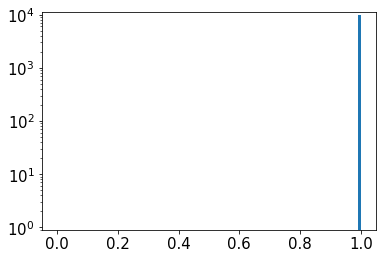}}&
			\raisebox{-.5\height}{\includegraphics[width=0.5\columnwidth]{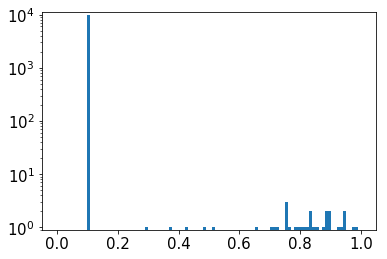}}&
			\raisebox{-.5\height}{\includegraphics[width=0.5\columnwidth]{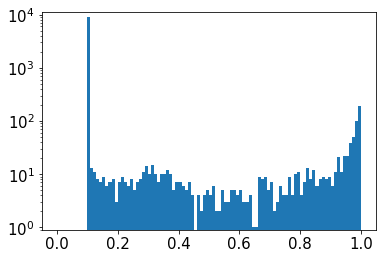}}\\
			\begin{tabular}{c}
				Adversarial \\ Samples
			\end{tabular} &
			\raisebox{-.5\height}{\includegraphics[width=0.5\columnwidth]{eval_graphs_mnistp_conf_adv.png}}&
			\raisebox{-.5\height}{\includegraphics[width=0.5\columnwidth]{eval_graphs_mnistc_conf_adv.png}}&
			\raisebox{-.5\height}{\includegraphics[width=0.5\columnwidth]{eval_graphs_mnista_conf_adv.png}}	
		\end{tabular}
	\end{center}
	\begin{center}
		\caption{\label{hist:mnist-app}Histograms (logarithmic scale) of maximum confidence values of the three compared models for \textbf{MNIST} on various evaluation datasets. 
		}
	\end{center}
\end{figure*}

\pagenumbering{gobble}

\begin{figure*}[ht]
	\begin{center}
		\begin{tabular}[t]{cccc}
			\textbf{Dataset} & \textbf{Plain} & \textbf{CEDA} & \textbf{ACET}\\
			\begin{tabular}{c}
				\textbf{SVHN}
			\end{tabular} &
			\raisebox{-.5\height}{\includegraphics[width=0.5\columnwidth]{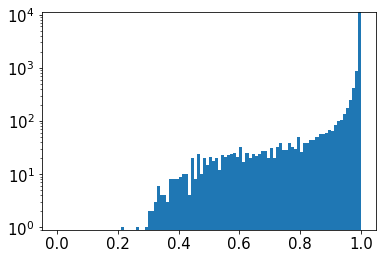}}&
			\raisebox{-.5\height}{\includegraphics[width=0.5\columnwidth]{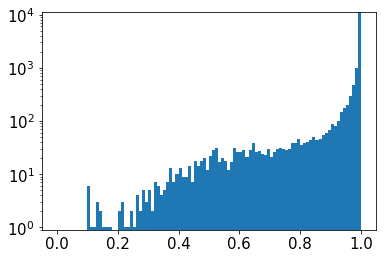}}&
			\raisebox{-.5\height}{\includegraphics[width=0.5\columnwidth]{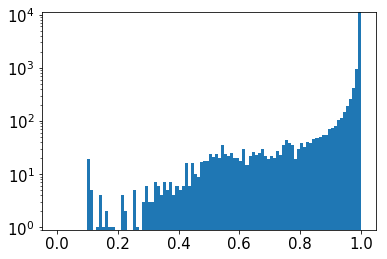}}\\
			\begin{tabular}{c}
				CIFAR-10 
			\end{tabular} &
			\raisebox{-.5\height}{\includegraphics[width=0.5\columnwidth]{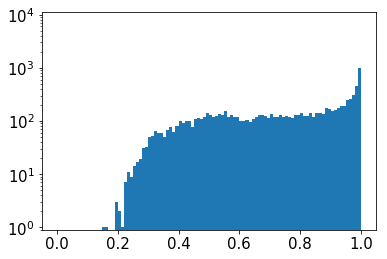}}&
			\raisebox{-.5\height}{\includegraphics[width=0.5\columnwidth]{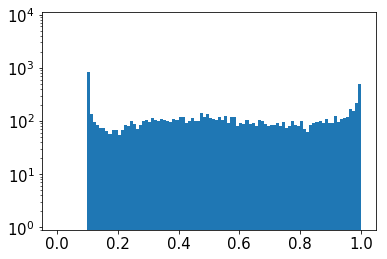}}&
			\raisebox{-.5\height}{\includegraphics[width=0.5\columnwidth]{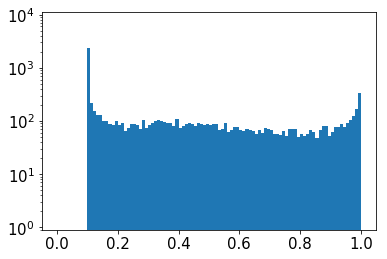}}\\
			\begin{tabular}{c}
				CIFAR-100 
			\end{tabular} &
			\raisebox{-.5\height}{\includegraphics[width=0.5\columnwidth]{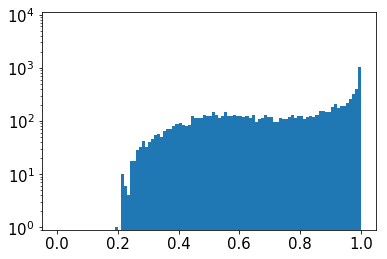}}&
			\raisebox{-.5\height}{\includegraphics[width=0.5\columnwidth]{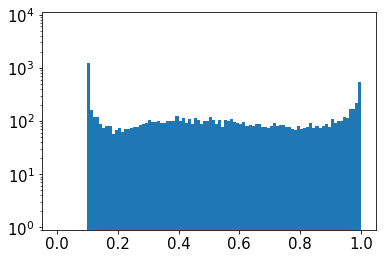}}&
			\raisebox{-.5\height}{\includegraphics[width=0.5\columnwidth]{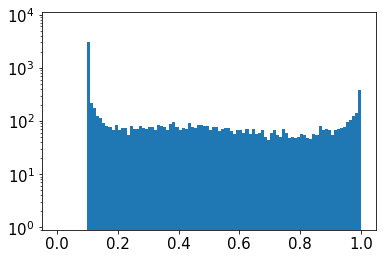}}\\
			\begin{tabular}{c}
				LSUN \\ Classroom
			\end{tabular} &
			\raisebox{-.5\height}{\includegraphics[width=0.5\columnwidth]{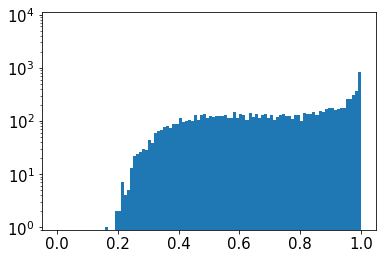}}&
			\raisebox{-.5\height}{\includegraphics[width=0.5\columnwidth]{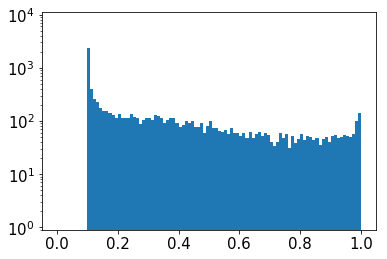}}&
			\raisebox{-.5\height}{\includegraphics[width=0.5\columnwidth]{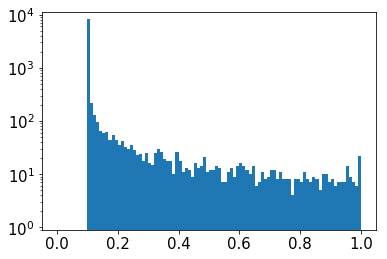}}\\
			\begin{tabular}{c}
				Imagenet \\ minus C10
			\end{tabular} &
			\raisebox{-.5\height}{\includegraphics[width=0.5\columnwidth]{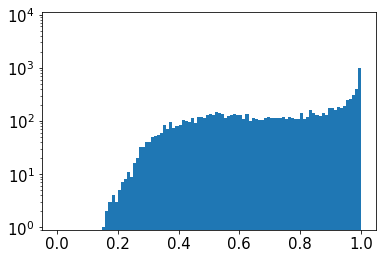}}&
			\raisebox{-.5\height}{\includegraphics[width=0.5\columnwidth]{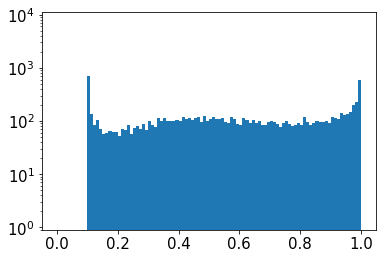}}&
			\raisebox{-.5\height}{\includegraphics[width=0.5\columnwidth]{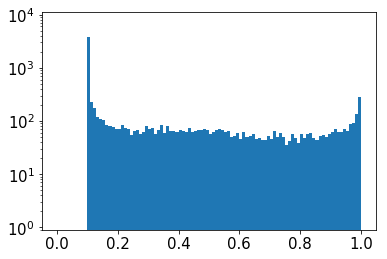}}\\
			\begin{tabular}{c}
				Noise
			\end{tabular} &
			\raisebox{-.5\height}{\includegraphics[width=0.5\columnwidth]{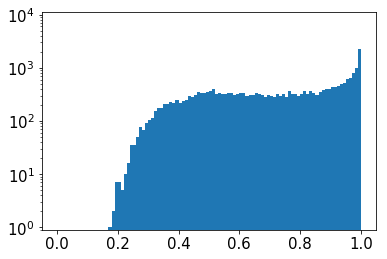}}&
			\raisebox{-.5\height}{\includegraphics[width=0.5\columnwidth]{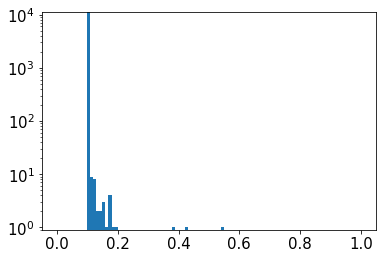}}&
			\raisebox{-.5\height}{\includegraphics[width=0.5\columnwidth]{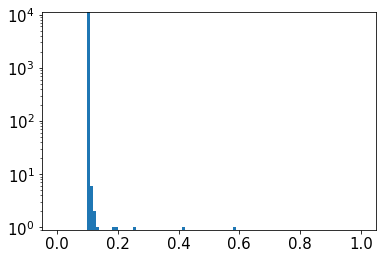}}\\
			\begin{tabular}{c}
				Adversarial \\ Noise
			\end{tabular} &
			\raisebox{-.5\height}{\includegraphics[width=0.5\columnwidth]{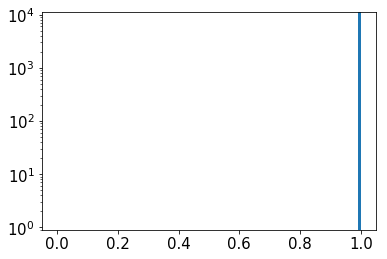}}&
			\raisebox{-.5\height}{\includegraphics[width=0.5\columnwidth]{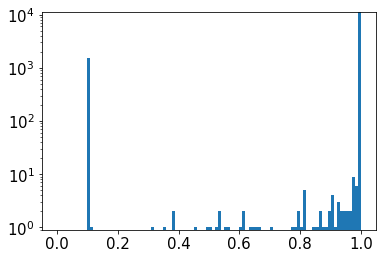}}&
			\raisebox{-.5\height}{\includegraphics[width=0.5\columnwidth]{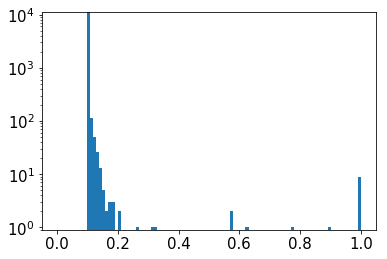}}\\
			\begin{tabular}{c}
				Adversarial \\ Samples
			\end{tabular} &
			\raisebox{-.5\height}{\includegraphics[width=0.5\columnwidth]{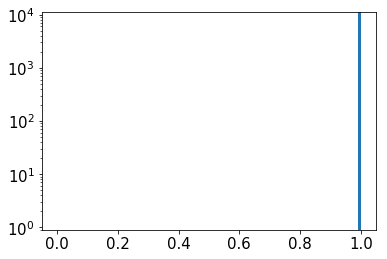}}&
			\raisebox{-.5\height}{\includegraphics[width=0.5\columnwidth]{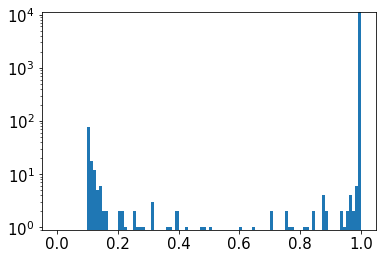}}&
			\raisebox{-.5\height}{\includegraphics[width=0.5\columnwidth]{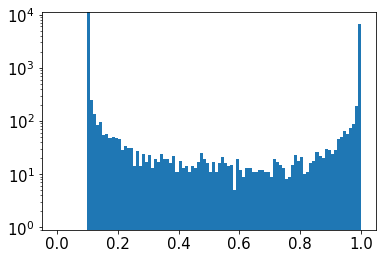}}	
		\end{tabular}
	\end{center}
	\begin{center}
		\caption{\label{hist:svhn}Histograms (logarithmic scale) of maximum confidence values of the three compared models for \textbf{SVHN} on various evaluation datasets. 
		}
	\end{center}
\end{figure*}

\begin{figure*}[ht]
	\begin{center}
		\begin{tabular}[t]{cccc}
			\textbf{Dataset} & \textbf{Plain} & \textbf{CEDA} & \textbf{ACET}\\
			\begin{tabular}{c}
				\textbf{CIFAR-10}
			\end{tabular} &
			\raisebox{-.5\height}{\includegraphics[width=0.5\columnwidth]{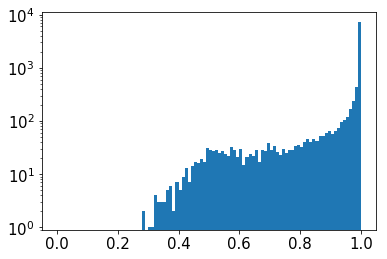}}&
			\raisebox{-.5\height}{\includegraphics[width=0.5\columnwidth]{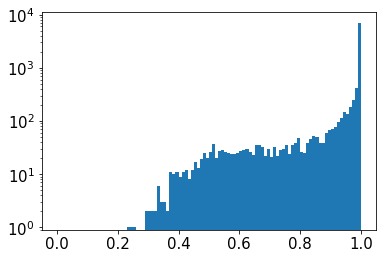}}&
			\raisebox{-.5\height}{\includegraphics[width=0.5\columnwidth]{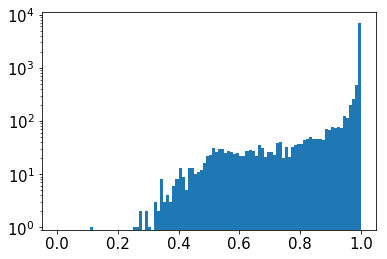}}\\
			\begin{tabular}{c}
				SVHN 
			\end{tabular} &
			\raisebox{-.5\height}{\includegraphics[width=0.5\columnwidth]{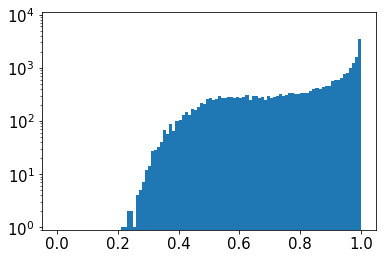}}&
			\raisebox{-.5\height}{\includegraphics[width=0.5\columnwidth]{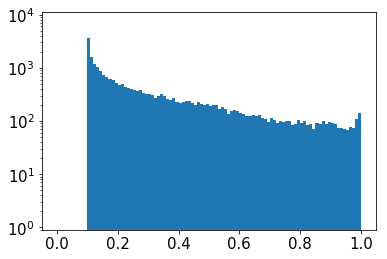}}&
			\raisebox{-.5\height}{\includegraphics[width=0.5\columnwidth]{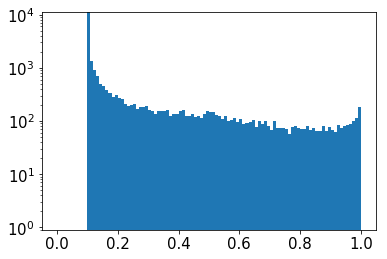}}\\
			\begin{tabular}{c}
				CIFAR-100 
			\end{tabular} &
			\raisebox{-.5\height}{\includegraphics[width=0.5\columnwidth]{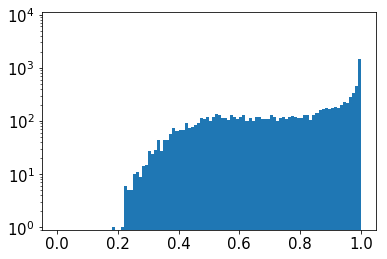}}&
			\raisebox{-.5\height}{\includegraphics[width=0.5\columnwidth]{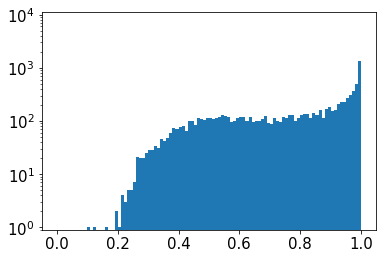}}&
			\raisebox{-.5\height}{\includegraphics[width=0.5\columnwidth]{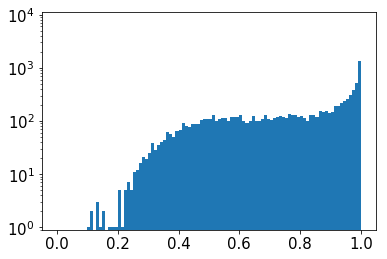}}\\
			\begin{tabular}{c}
				LSUN \\ Classroom
			\end{tabular} &
			\raisebox{-.5\height}{\includegraphics[width=0.5\columnwidth]{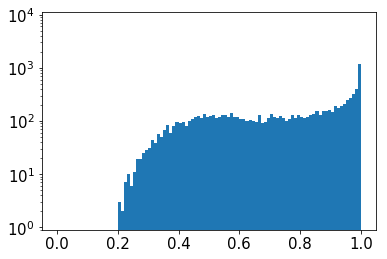}}&
			\raisebox{-.5\height}{\includegraphics[width=0.5\columnwidth]{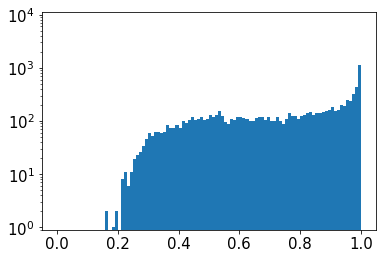}}&
			\raisebox{-.5\height}{\includegraphics[width=0.5\columnwidth]{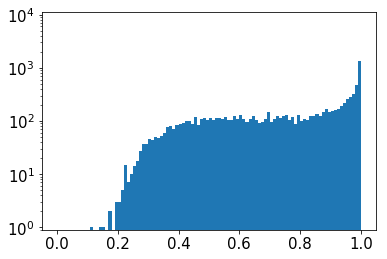}}\\
			\begin{tabular}{c}
				Imagenet \\ minus C10
			\end{tabular} &
			\raisebox{-.5\height}{\includegraphics[width=0.5\columnwidth]{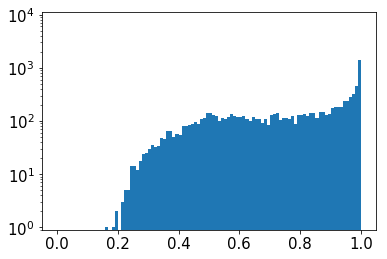}}&
			\raisebox{-.5\height}{\includegraphics[width=0.5\columnwidth]{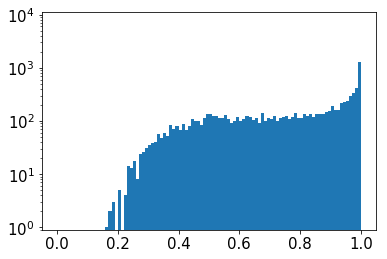}}&
			\raisebox{-.5\height}{\includegraphics[width=0.5\columnwidth]{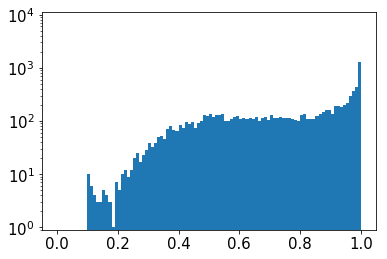}}\\
			\begin{tabular}{c}
				Noise
			\end{tabular} &
			\raisebox{-.5\height}{\includegraphics[width=0.5\columnwidth]{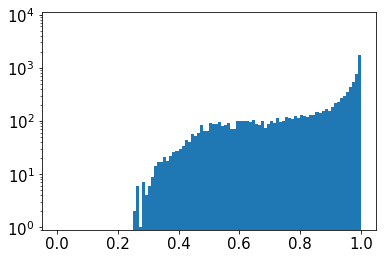}}&
			\raisebox{-.5\height}{\includegraphics[width=0.5\columnwidth]{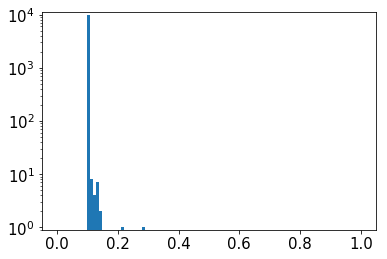}}&
			\raisebox{-.5\height}{\includegraphics[width=0.5\columnwidth]{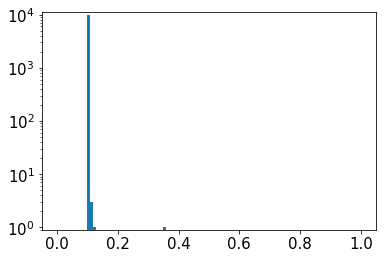}}\\
			\begin{tabular}{c}
				Adversarial \\ Noise
			\end{tabular} &
			\raisebox{-.5\height}{\includegraphics[width=0.5\columnwidth]{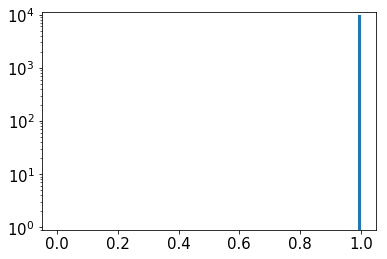}}&
			\raisebox{-.5\height}{\includegraphics[width=0.5\columnwidth]{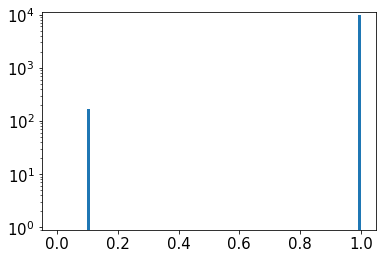}}&
			\raisebox{-.5\height}{\includegraphics[width=0.5\columnwidth]{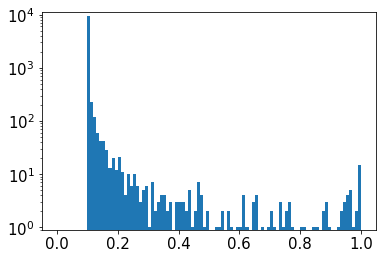}}\\
			\begin{tabular}{c}
				Adversarial \\ Samples
			\end{tabular} &
			\raisebox{-.5\height}{\includegraphics[width=0.5\columnwidth]{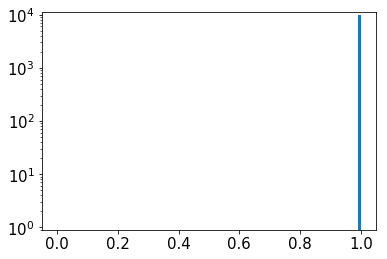}}&
			\raisebox{-.5\height}{\includegraphics[width=0.5\columnwidth]{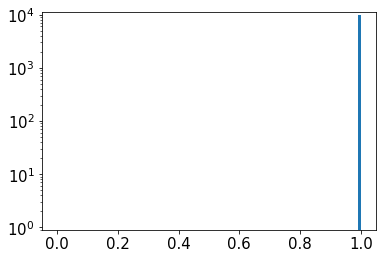}}&
			\raisebox{-.5\height}{\includegraphics[width=0.5\columnwidth]{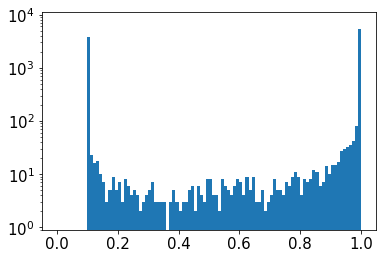}}	
		\end{tabular}
	\end{center}
	\begin{center}
		\caption{\label{hist:cifar10}Histograms (logarithmic scale) of maximum confidence values of the three compared models for \textbf{CIFAR-10} on various evaluation datasets. 
		}
	\end{center}
\end{figure*}

\begin{figure*}[ht]
	\begin{center}
		\begin{tabular}[t]{cccc}
			\textbf{Dataset} & \textbf{Plain} & \textbf{CEDA} & \textbf{ACET}\\
			\begin{tabular}{c}
				\textbf{CIFAR-100}
			\end{tabular} &
			\raisebox{-.5\height}{\includegraphics[width=0.5\columnwidth]{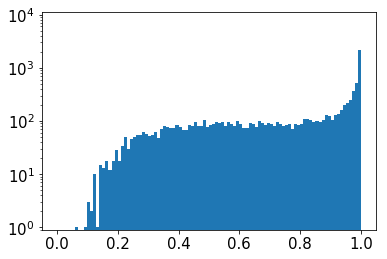}}&
			\raisebox{-.5\height}{\includegraphics[width=0.5\columnwidth]{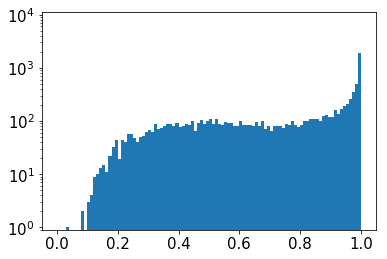}}&
			\raisebox{-.5\height}{\includegraphics[width=0.5\columnwidth]{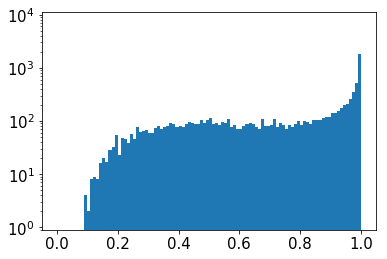}}\\
			\begin{tabular}{c}
				SVHN 
			\end{tabular} &
			\raisebox{-.5\height}{\includegraphics[width=0.5\columnwidth]{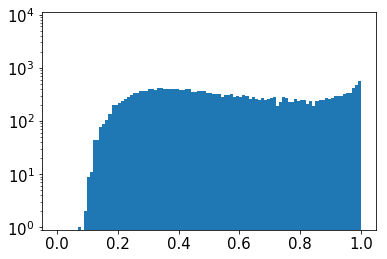}}&
			\raisebox{-.5\height}{\includegraphics[width=0.5\columnwidth]{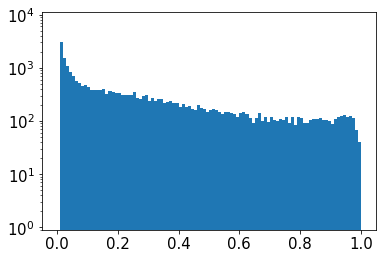}}&
			\raisebox{-.5\height}{\includegraphics[width=0.5\columnwidth]{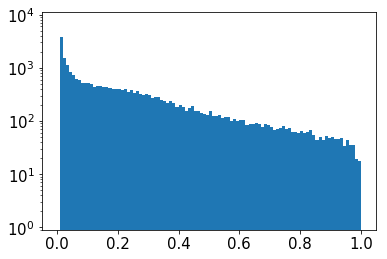}}\\
			\begin{tabular}{c}
				CIFAR-10
			\end{tabular} &
			\raisebox{-.5\height}{\includegraphics[width=0.5\columnwidth]{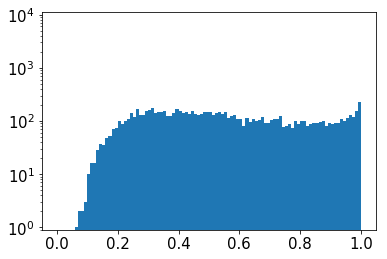}}&
			\raisebox{-.5\height}{\includegraphics[width=0.5\columnwidth]{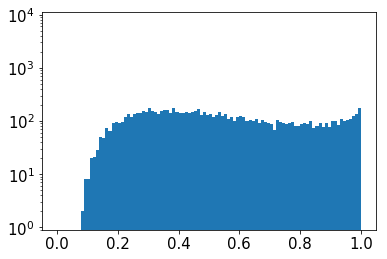}}&
			\raisebox{-.5\height}{\includegraphics[width=0.5\columnwidth]{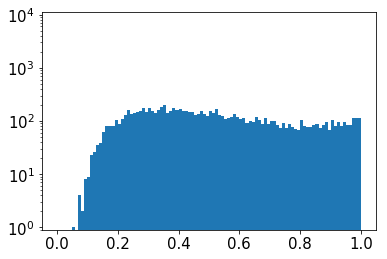}}\\
			\begin{tabular}{c}
				LSUN \\ Classroom
			\end{tabular} &
			\raisebox{-.5\height}{\includegraphics[width=0.5\columnwidth]{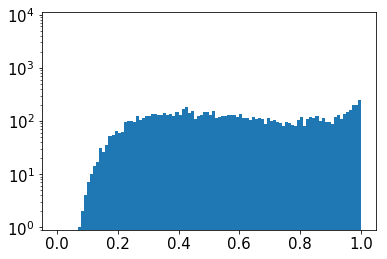}}&
			\raisebox{-.5\height}{\includegraphics[width=0.5\columnwidth]{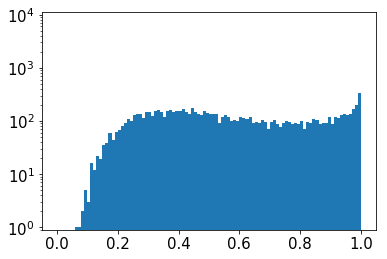}}&
			\raisebox{-.5\height}{\includegraphics[width=0.5\columnwidth]{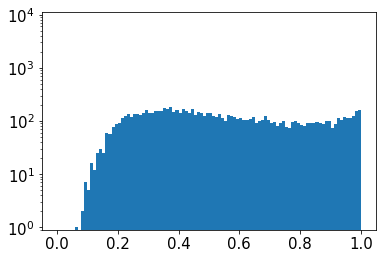}}\\
			\begin{tabular}{c}
				Imagenet \\ minus C10
			\end{tabular} &
			\raisebox{-.5\height}{\includegraphics[width=0.5\columnwidth]{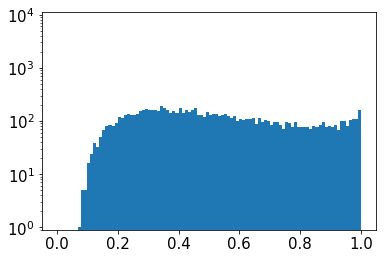}}&
			\raisebox{-.5\height}{\includegraphics[width=0.5\columnwidth]{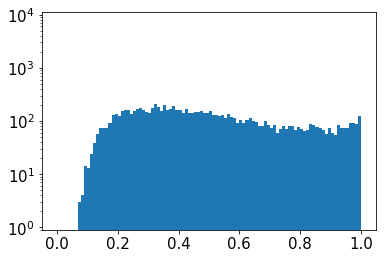}}&
			\raisebox{-.5\height}{\includegraphics[width=0.5\columnwidth]{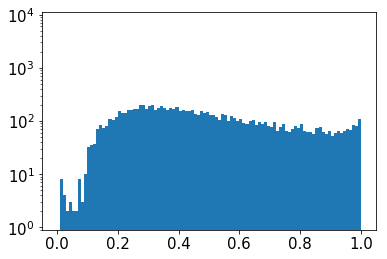}}\\
			\begin{tabular}{c}
				Noise
			\end{tabular} &
			\raisebox{-.5\height}{\includegraphics[width=0.5\columnwidth]{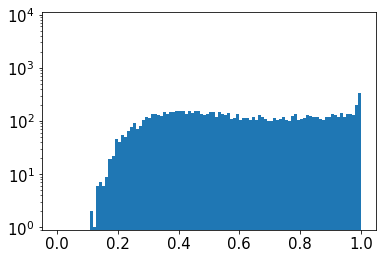}}&
			\raisebox{-.5\height}{\includegraphics[width=0.5\columnwidth]{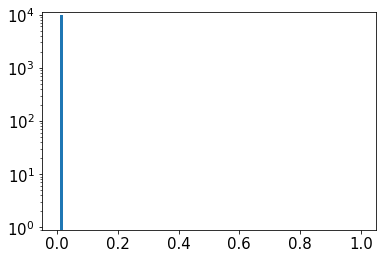}}&
			\raisebox{-.5\height}{\includegraphics[width=0.5\columnwidth]{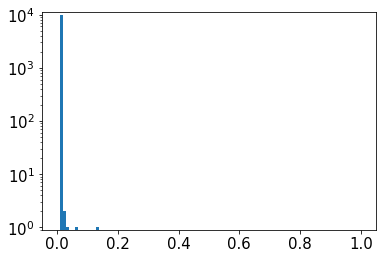}}\\
			\begin{tabular}{c}
				Adversarial \\ Noise
			\end{tabular} &
			\raisebox{-.5\height}{\includegraphics[width=0.5\columnwidth]{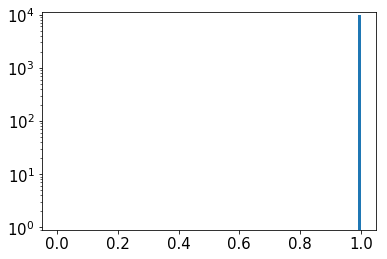}}&
			\raisebox{-.5\height}{\includegraphics[width=0.5\columnwidth]{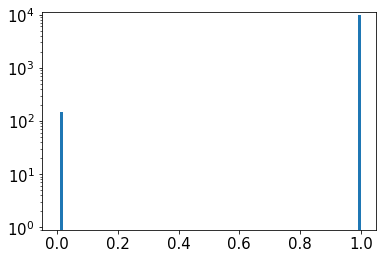}}&
			\raisebox{-.5\height}{\includegraphics[width=0.5\columnwidth]{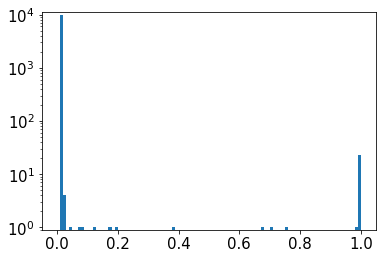}}\\
			\begin{tabular}{c}
				Adversarial \\ Samples
			\end{tabular} &
			\raisebox{-.5\height}{\includegraphics[width=0.5\columnwidth]{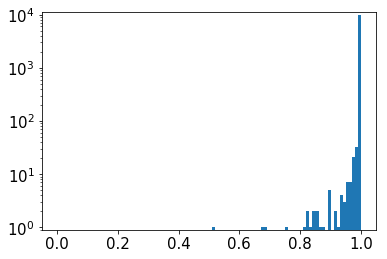}}&
			\raisebox{-.5\height}{\includegraphics[width=0.5\columnwidth]{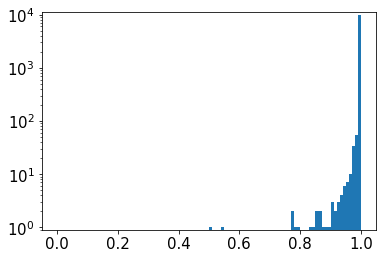}}&
			\raisebox{-.5\height}{\includegraphics[width=0.5\columnwidth]{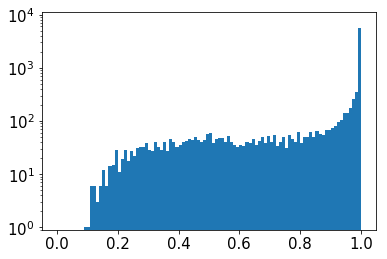}}	
		\end{tabular}
	\end{center}
	\begin{center}
		\caption{\label{hist:cifar100}Histograms (logarithmic scale) of maximum confidence values of the three compared models for \textbf{CIFAR-100} on various evaluation datasets. 
		}
	\end{center}
\end{figure*}

\end{document}